\pdfoutput=1

\documentclass{article}

\usepackage{microtype}
\usepackage{graphicx}
\usepackage{subcaption}
\usepackage{booktabs} 

\usepackage{hyperref}

\usepackage[accepted]{icml2026}

\usepackage{amsmath}
\usepackage{amssymb}
\usepackage{mathtools}
\usepackage{amsthm}

\usepackage[capitalize,noabbrev]{cleveref}

\usepackage{tabularx}
\usepackage{array}
\usepackage{multirow}
\usepackage{xcolor} 
\usepackage{colortbl}
\usepackage{adjustbox}
\usepackage{makecell} 
\newcolumntype{L}{>{\raggedright\arraybackslash}X}
\theoremstyle{plain}
\newtheorem{theorem}{Theorem}[section]
\newtheorem{proposition}[theorem]{Proposition}
\newtheorem{lemma}[theorem]{Lemma}

\theoremstyle{definition}

\theoremstyle{remark}

\icmltitlerunning{Learning to Think in Physics}

\begin{document}

\twocolumn[
  \icmltitle{Learning to Think in Physics: Breaking Shortcut Learning in Scientific Diffusion via Representation Alignment}



  \icmlsetsymbol{equal}{*}

\begin{icmlauthorlist}
    \icmlauthor{Haozhe Jia}{equal,hkustg,sdu}
    \icmlauthor{Pengyu Yin}{equal,sdu}
    \icmlauthor{Wenshuo Chen}{equal,hkustg}
    \icmlauthor{Shaofeng Liang}{hkustg}
    \icmlauthor{Lei Wang}{csiro,griffith}
    \icmlauthor{Bowen Tian}{hkustg,sdu,limx}
    \icmlauthor{Xiucheng Wang}{xdu}
    \icmlauthor{Nanqian Jia}{pku}
    \icmlauthor{Yutao Yue}{hkustg,jitri}
\end{icmlauthorlist}

\icmlaffiliation{hkustg}{The Hong Kong University of Science and Technology (Guangzhou), Guangzhou, China}
\icmlaffiliation{sdu}{Shandong University, Jinan, China}
\icmlaffiliation{limx}{LimX Dynamics Technology Co., Ltd., Shenzhen, China}
\icmlaffiliation{xdu}{Xidian University, Xi'an, China}
\icmlaffiliation{pku}{Peking University, Beijing, China}
\icmlaffiliation{jitri}{Institute of Deep Perception Technology, Jiangsu Industrial Technology Research Institute (JITRI), Nanjing, China}
\icmlaffiliation{csiro}{Data61/CSIRO, Australia}
\icmlaffiliation{griffith}{Griffith University, Brisbane, Australia}

\icmlcorrespondingauthor{Yutao Yue}{yutaoyue@hkust-gz.edu.cn}

\icmlkeywords{Physics-Informed Machine Learning, Scientific Diffusion Models, Representation Alignment, Shortcut Learning}

\vskip 0.3in
]



\printAffiliationsAndNotice{}  

\begin{abstract}
Physics-informed diffusion models typically enforce PDE constraints only on final outputs, leaving intermediate representations unconstrained and prone to shortcut learning under shifted boundary conditions. We introduce \textbf{REPA-P}, a teacher-free, architecture-agnostic framework that aligns intermediate features with physical states using first-principles residuals. REPA-P attaches lightweight $1{\times}1$ projection heads to selected layers, decodes hidden activations into physical quantities, and applies PDE residual losses during training. These heads are discarded at inference, introducing \textbf{zero overhead}. Across four PDE tasks, including Darcy flow, topology optimization, electrostatic potential, and turbulent channel flow, REPA-P accelerates convergence by up to $2{\times}$, reduces physics residuals by up to $66.4\%$, and improves out-of-distribution robustness by up to $49.3\%$, with consistent gains on both U-Net and Diffusion Transformer backbones. Ablations show that supervising a small set of intermediate layers captures most benefits and complements output-level physics losses. Code is available at \url{https://github.com/Hxxxz0/REPA-P}.
\end{abstract}

\section{Introduction}
\label{sec:intro}
\begin{figure*}
    \centering
    \includegraphics[width=1\linewidth]{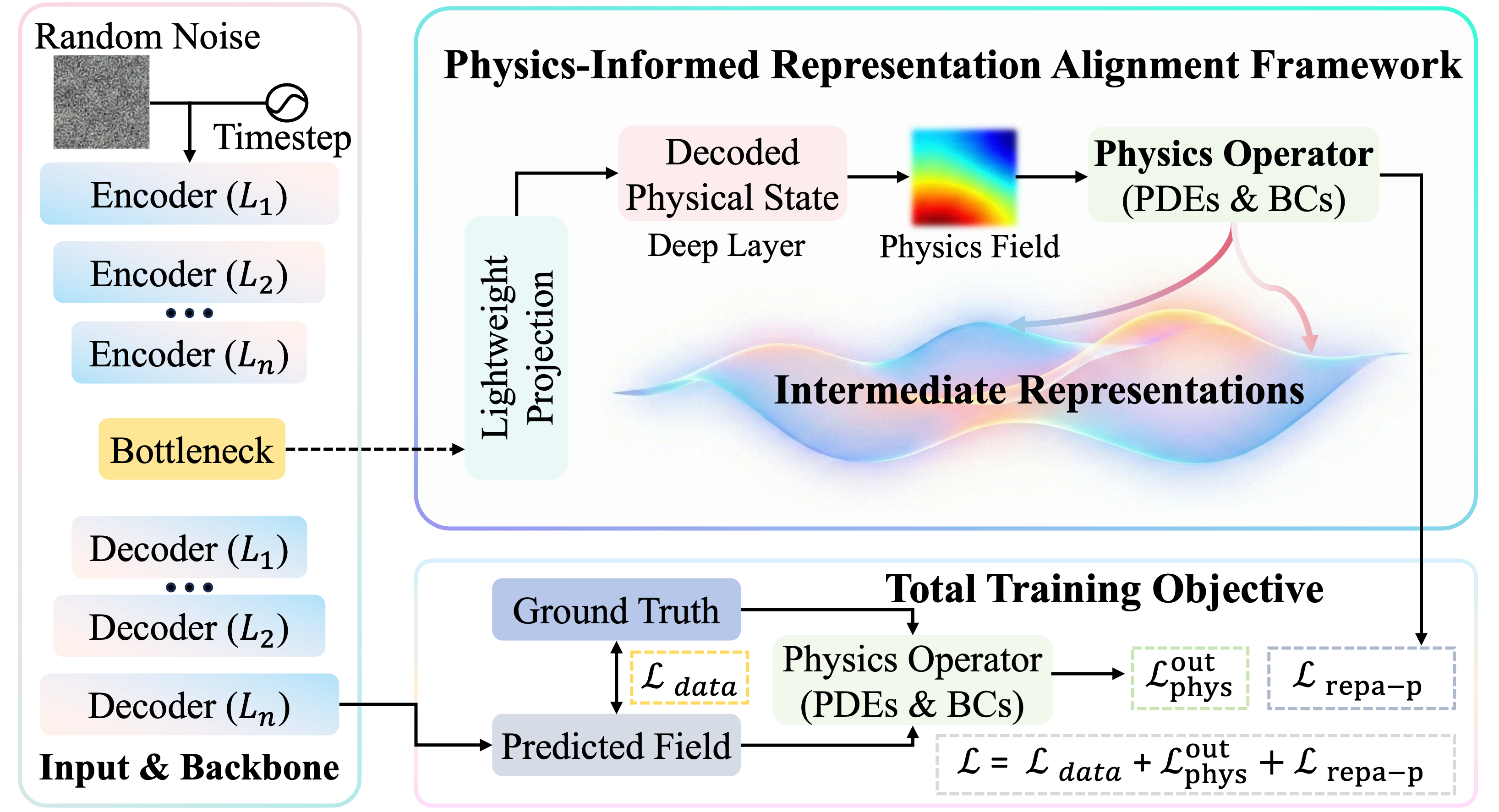}
    \caption{Overview of REPA-P. We decode intermediate diffusion features into physical states using lightweight projection heads and enforce PDE and boundary-condition residuals as supervision to align latent representations with valid physics.}
        \label{fig:placeholder}
\end{figure*}





Generative models have achieved substantial progress in recent years, particularly in image synthesis \cite{ning2025dctdiffintriguingpropertiesimage,esser2024scalingrectifiedflowtransformers} and video synthesis \cite{video1}. Achieving high-quality generation often relies on scalable and effective architectures, such as U-Net and DiT \cite{unet, dit}. Despite these advances, extending these successes to AI for Science remains fundamentally challenging. Unlike natural images, scientific data is governed by strict, immutable physical laws \cite{cuomo2022scientific}.

The fundamental problem with current data-driven approaches, even when augmented with physics constraints at the output, is that they may still be prone to \textbf{shortcut learning} \cite{bastek2025physicsinformeddiffusionmodels}. This is particularly true when constraints are applied solely to the final prediction, allowing the internal network to bypass physical reasoning. By exploiting the powerful expressivity of deep networks, models essentially ``memorize'' spurious correlations, distribution-specific cues, and discretization artifacts of the training set to minimize the denoising objective, rather than genuinely ``understanding'' the underlying physical mechanisms. This is analogous to a student who rote memorizes the answers to math problems without understanding the derivation steps: they may score well on familiar questions, but fail catastrophically when the question parameters change. In the context of diffusion models, this lack of internal understanding may lead to fragility against out-of-distribution (OOD) boundary conditions and physical inconsistencies that are merely masked by superficial pattern matching.

\noindent\textbf{Hypothesis.} We hypothesize that the robustness and generalizability of scientific generative models can be significantly enhanced by \textbf{aligning intermediate representations with physical states}. While standard diffusion models operate effectively as statistical denoisers, we posit that if their latent features are structurally aligned with physical quantities (e.g., velocity, pressure), the model is better positioned to capture the underlying dynamics rather than relying on spurious correlations \cite{li2021fourierneuraloperatorparametric}. We refer to this property as \textbf{Physical Decodability}: the capability of a lightweight decoder to map latent features to state variables that satisfy governing equations. We argue that encouraging such decodability acts as a powerful inductive bias, guiding the model to internalize physical laws and thereby improving both understanding and generation quality, especially in out-of-distribution scenarios \cite{bastek2025physicsinformeddiffusionmodels}.

Based on this hypothesis, we propose \textbf{Physics-Informed Representation Alignment (REPA-P)}, a framework that forces the model to `learn to think in physics' by enforcing \textbf{Physical Decodability Constraints} within the latent space. Specifically, we insert lightweight projection heads into the intermediate layers of the diffusion backbone. These heads are tasked with ``translating'' the high dimensional latent features into physical state variables. We then directly apply the governing Partial Differential Equations (PDEs) to these decoded states and use the PDE residuals as a supervision signal.

This mechanism fundamentally changes the learning dynamic. By calculating PDE residuals on intermediate features, we essentially force the network to ``think'' in the language of physics. The gradient signal effectively tells the model: \textit{``Your internal representation must translate to valid physics.''} This precludes the model from relying on statistical shortcuts or unphysical latent trajectories \cite{bastek2025physicsinformeddiffusionmodels}. Crucially, this approach replaces the need for external visual teachers with the First Principles themselves, which are the most general and transferable ``teachers'' available in science.

Systematic experiments on standard benchmarks (e.g., Darcy Flow, Topology Optimization, Electrostatic Charge Potential) validate our hypothesis \cite{bastek2025physicsinformeddiffusionmodels}. We demonstrate that by enforcing physical decodability, the model moves beyond rote memorization. Our method not only accelerates convergence but, more importantly, achieves superior generalization on OOD tasks where standard baselines fail. This suggests that the model has successfully internalized the physical laws within its hidden layers, bridging the gap between data-driven generation and principle-based reasoning.

\noindent \textbf{Contributions.}
Our main contributions are summarized as follows:
\begin{itemize}
    \setlength{\itemsep}{0pt}
    \setlength{\parsep}{0pt}
    \setlength{\parskip}{0pt}
    \item We introduce \textbf{REPA-P}, a teacher-free, architecture-agnostic framework enforcing physical decodability on intermediate representations via lightweight $1{\times}1$ projection heads, achieving equally strong performance on both U-Net and DiT backbones.
    \item We demonstrate this internal alignment breaks shortcut learning via short-path gradients, compelling the model to internalize physical laws rather than memorize training patterns.
    \item We validate REPA-P across four PDE benchmarks covering generation and reconstruction, reducing physics residuals by up to $66.4\%$, accelerating convergence by $2{\times}$, with \textbf{zero inference overhead}.
    \item Ablations show intermediate supervision complements output-level losses, and only a small set of core layers captures most performance gains.
\end{itemize}

\section{Related Work}
\noindent \textbf{Diffusion-based generation.}
Diffusion models (a.k.a.\ score-based generative models) have become a dominant paradigm for high-fidelity generation, tracing back to early formulations based on learning a reverse diffusion process \cite{sohldickstein2015deepunsupervisedlearningusing} and later popularized by DDPM-style denoising objectives \cite{ho2020denoisingdiffusionprobabilisticmodels} and continuous-time score/SDE views \cite{song2021scorebasedgenerativemodelingstochastic}. Subsequent work improved sampling efficiency and quality via implicit/deterministic samplers and improved training objectives \cite{DBLP:journals/corr/abs-2010-02502,nichol2021improveddenoisingdiffusionprobabilistic,karras2022elucidatingdesignspacediffusionbased}, as well as scalable architectures such as latent diffusion and transformer backbones \cite{rombach2022highresolutionimagesynthesislatent}. For conditional generation and controllability, classifier-free guidance and control adapters are widely used \cite{ho2022classifierfreediffusionguidance,zhang2023addingconditionalcontroltexttoimage}. A large body of research further accelerates sampling through better solvers, distillation, and one/few-step consistency-style training \cite{lu2022dpmsolverfastodesolver,salimans2022progressivedistillationfastsampling,song2023consistencymodels,zheng2023fastsamplingdiffusionmodels}, and diffusion priors have also been extended to plug-and-play inverse problems \cite{kawar2022denoisingdiffusionrestorationmodels,chung2024diffusionposteriorsamplinggeneral}. Beyond pixel-space modeling, recent work has explored alternative representation domains such as DCT frequency space~\cite{ning2025dctdiff} and frequency-domain consistency~\cite{chen2025freet2mconsistency}. Orthogonal advances improve inversion stability~\cite{chen2025polaris}, sampling-path guidance~\cite{li2025guidedpathsampling}, and text-to-motion generation~\cite{chen2025ant,jia2025luma}, with emerging applications in humanoid control~\cite{jia2026echo,jia2026beforebodymoves}, generative evaluation~\cite{chen2026bettermetrics}, and latent optimization~\cite{li2026oraclenoise}.

\noindent\textbf{Physics-informed diffusion for scientific data.}
In scientific machine learning, purely data-driven generators must respect immutable governing laws; classical physics-informed learning (e.g., PINNs) and neural operators (e.g., FNO/PINO) provide complementary routes to enforce PDE structure \cite{cuomo2022scientific,li2021fourierneuraloperatorparametric,li2023physicsinformedneuraloperatorlearning}. Recent physics-informed diffusion models incorporate first-principles constraints by injecting PDE residual supervision into the diffusion training/sampling process \cite{bastek2025physicsinformeddiffusionmodels,jia2025physicsinformedrepresentationalignmentsparse,wang2025phydaphysicsguideddiffusionmodels}, improving physical consistency and robustness under distribution shifts. Concurrently, representation alignment for physics-informed learning has been applied to radio-map reconstruction~\cite{jia2025rmdm,jia2025physicsinformedrepresentation}, where mid-layer PDE supervision improved sparse-field reconstruction. Flow matching provides an alternative generative paradigm for efficient physical field construction~\cite{jia2025radioflow}. Unlike output-only physics losses or generic deep supervision that adds auxiliary objectives at intermediate layers, we constrain intermediate representations to be decodable into valid physical states via direct PDE gradients, which discourages shortcut learning through shortened credit assignment paths.

\section{Method}
\subsection{Preliminaries}
We employ a diffusion model \cite{ho2020denoisingdiffusionprobabilisticmodels} parameterized by $\theta$ to learn a distribution over discretized physical fields on a uniform grid $\Omega_h$. Each field $x_0 \in \mathbb{R}^{C \times H \times W}$ may contain state variables such as pressure $p$ and parameters such as permeability $K$. The model is trained to predict the clean state $x_0$ from noisy states $x_t$ (where $t \sim \mathcal{U}[1, T]$ indexes the diffusion timestep) by minimizing the denoising objective:
\begin{equation}
    \mathcal{L}_{\text{data}}(\theta)=\mathbb{E}_{t,x_0,\epsilon}\left[\lambda_t\|x_0-\hat x_0(x_t,t)\|_2^2\right],
    \label{eq:data_loss}
\end{equation}
where $\epsilon \sim \mathcal{N}(0, I)$ is the Gaussian noise ($I$ is the identity matrix), $\hat{x}_0(x_t, t)$ is the model prediction, and $\lambda_t$ is a weighting term derived from the noise schedule.

Physical laws are captured by a discretized residual operator $R(x_0)$ encoding PDEs and boundary conditions (BCs). For instance, in Darcy flow, $R$ approximates mass conservation $\nabla \cdot (K \nabla p) - f_s$, where $f_s$ is the source term. We adopt the virtual-observable framework \cite{bastek2025physicsinformeddiffusionmodels}, modeling residuals as Gaussian variables with variance $\sigma^2(t)$ proportional to the diffusion posterior variance $\tilde\beta_t$:
\begin{align}
R(x_0) &:= \begin{bmatrix} \mathcal{F}_h[x_0] \\ \mathcal{B}_h[x_0] \end{bmatrix}, \label{eq:residual} \\
\sigma^2(t) &= \tilde\beta_t = \frac{1-\bar\alpha_{t-1}}{1-\bar\alpha_t}\beta_t. \label{eq:posterior-var}
\end{align}
where $\mathcal{F}_h$ and $\mathcal{B}_h$ denote the discretized interior PDE and boundary condition operators, respectively. Here, $\beta_t$ represents the forward noise variance schedule, and $\bar\alpha_t = \prod_{s=1}^t (1-\beta_s)$ is the cumulative noise coefficient. This variance scaling $\sigma^2(t)$ ensures that physics supervision is dynamically calibrated to the noise level, becoming stricter as the denoising process converges.

\subsection{Physics-Informed Representation Alignment}
\label{sec:method}
In this section, we introduce the \textbf{Physics-Informed Representation Alignment (REPA-P)} framework. We first define the alignment mapping that projects latent features to physical space, then describe the layer-wise physics loss, and finally present the total training objective.

\noindent\textbf{Physical Decodability and Motivation.}
Standard physics-informed diffusion models enforce constraints only on the final output $\hat{x}_0$ \cite{bastek2025physicsinformeddiffusionmodels}. While effective for refinement, this ``output-only'' supervision allows the deep backbone to remain a black box, potentially leading to \emph{shortcut learning} where the model memorizes surface statistics rather than internalizing the governing laws.
\textbf{Our approach differs fundamentally} by enforcing \emph{Physical Decodability} within the intermediate layers. We hypothesize that a robust scientific generative model should possess an internal representation where hidden states are linearly (or lightly) decodable into valid physical quantities. By supervising these intermediate states with PDE residuals, we provide \emph{in-situ} gradients that shorten the credit assignment path and preclude statistical shortcuts. This compels latent features to align with physical principles before the final decoding stage (see Appendix~\ref{app:gradflow} for a formal analysis of gradient attenuation, a phenomenon also observed across domains such as text-to-motion generation~\cite{jia2025luma}).

\noindent\textbf{Alignment Mapping.}
Let $\{h_\ell\}_{\ell=1}^{L}$ denote the hidden feature tensors at $L$ selected layers (e.g., encoder, bottleneck, decoder) of the U-Net backbone. For a mini-batch of $B$ samples, each hidden tensor has shape $h_\ell\in\mathbb{R}^{B\times C_\ell\times H_\ell\times W_\ell}$. We map each $h_\ell$ to the physical channel space via a lightweight projection head and resize the result to the output resolution $(H,W)$.
We define a per-layer alignment mapping:
\begin{equation}
z_\ell=\Pi_\ell(h_\ell)\;:=\;\mathcal{I}_\ell\!\big(\psi_\ell(h_\ell)\big)\;\in\;\mathbb{R}^{B\times C\times H\times W},
\label{eq:pi}
\end{equation}
where $\psi_\ell:\mathbb{R}^{C_\ell}\to\mathbb{R}^{C}$ is a $1{\times}1$ convolutional block (parameterized by $\phi_\ell$) that projects to the output channel dimension $C$, and $\mathcal{I}_\ell$ is a bilinear interpolation operator that resizes $(H_\ell,W_\ell)\to(H,W)$. The projected tensor $z_\ell$ acts as a proxy physical field, to which we apply the same discretized residual operator $R(\cdot)$ defined in Eq.~\eqref{eq:residual}.

\noindent\textbf{Total Training Objective.}
For the main output, the physics loss at timestep $t$ is:
\begin{equation}
\mathcal{L}_{\text{phys}}^{\text{out}}(t)\;=\;\frac{1}{2}\,\frac{\big\|R\!\big(\,\hat x_0(x_t,t)\,\big)\big\|_2^2}{\sigma^2(t)}.
\label{eq:phys-out}
\end{equation}
For each aligned intermediate representation $z_\ell$, we compute an analogous physics loss, averaged over a set of alignment positions $\mathcal{P}\subseteq\{1,\dots,L\}$:
\begin{equation}
\mathcal{L}_{\text{repa-p}}(t)\;=\;\frac{1}{|\mathcal{P}|}\sum_{\ell\in\mathcal{P}} \underbrace{\frac{1}{2}\,\frac{\big\|R\!\big(z_\ell\big)\big\|_2^2}{\sigma^2(t)}}_{\mathcal{L}_{\text{phys}}^{(\ell)}(t)}.
\label{eq:phys-mid-avg}
\end{equation}
The total training objective combines the data fidelity term with the output and mid-layer physics constraints:
\begin{equation}
\begin{aligned}
\mathcal{L}_{\text{total}}(\theta, \{\phi_\ell\}) = \mathbb{E}_{t, x_0, \epsilon} \Big[ & \mathcal{L}_{\text{data}}(\theta) + c_{\text{out}} \mathcal{L}_{\text{phys}}^{\text{out}}(t) \\
& + c_{\text{mid}} \mathcal{L}_{\text{repa-p}}(t) \Big],
\end{aligned}
\label{eq:total}
\end{equation}
where hyperparameters $c_{\text{out}}, c_{\text{mid}} > 0$ balance the output and intermediate physics supervision.

\noindent\textbf{Numerical Stabilization: Centered Pressure.}
For physical variables that are shift-invariant (e.g., pressure $p$ in incompressible flow, where the PDE $-\nabla \cdot (K \nabla p) = f$ depends only on gradients), the solution is unique only up to an additive constant. This ambiguity can cause numerical instability during training, as the model effectively faces a null space in the optimization landscape. To resolve this, we enforce a zero-mean constraint. Let $p$ denote the pressure component of the state $x_0$, and define the spatial mean over grid $\Omega_h$ as $\bar{p} = \frac{1}{HW} \sum_{i,j} p[i,j]$, where $i,j$ index the spatial grid positions. We then evaluate the physics loss using the centered pressure:
\begin{equation}
p^\circ = p - \bar{p}, \qquad \mathcal{L}_{\text{phys}} \propto \big\|R(p^\circ, K)\big\|_2^2,
\label{eq:mean-fix}
\end{equation}
where the notation $\propto$ indicates that we minimize the squared residual norm (equivalently, maximizing the log-likelihood of the virtual observation). This centering ensures residuals are evaluated on a unique solution branch.

\noindent\textbf{Discussion.}
The gradients produced by Eq.~\eqref{eq:phys-mid-avg} act \emph{in situ} on intermediate representations. Sharing the same $t$-dependent scaling $\sigma^2(t)$ across Eq.~\eqref{eq:phys-out} and Eq.~\eqref{eq:phys-mid-avg} aligns the training signal with the reliability of denoising steps. At inference, the projection heads $\{\psi_\ell\}$ are discarded, incurring \textbf{zero inference overhead}.

\section{Experiments}
We evaluate \textbf{REPA-P} on four PDE-governed tasks spanning distinct physical domains: \emph{Darcy flow}, \emph{Topology optimization}, \emph{Electrostatic charge potential}, and \emph{Turbulent channel flow}. These benchmarks cover unconditional generation, conditional generation, and sparse reconstruction, enabling comprehensive assessment of physics-consistent representation learning.

\begin{figure*}[t]
    \centering
    \includegraphics[width=1\linewidth]{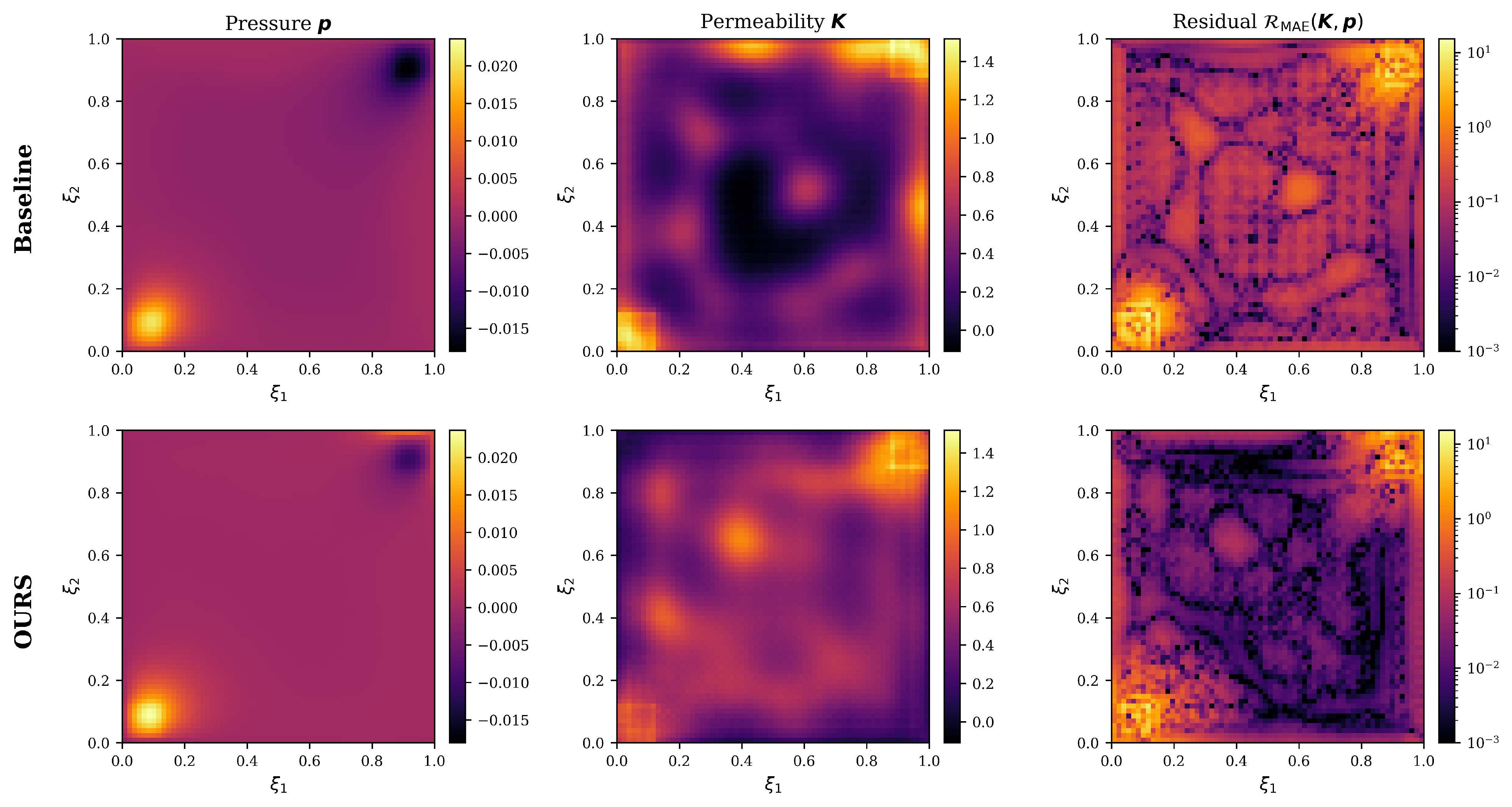}
    \caption{\textbf{Darcy flow qualitative comparison (Baseline vs.\ REPA-P).}
Top: baseline diffusion; bottom: REPA-P. Each row shows (left$\rightarrow$right) the predicted pressure $p$, the permeability field $K$, and the PDE residual $R_{\mathrm{MAE}}(K,p)$ (log scale).
Compared to the baseline, REPA-P produces pressure fields that better respect the structure induced by $K$ and achieves consistently lower residuals, indicating improved satisfaction of the governing equation.}
    \label{fig:darcy_comparison}
\end{figure*}

\paragraph{Research questions.}
We investigate:

\noindent\textbf{(Q1)} Does \textbf{REPA-P} accelerate the learning of physics-consistent representations?

\noindent\textbf{(Q2)} Does \textbf{REPA-P} improve overall data fitting and convergence?

\noindent\textbf{(Q3)} Does \textbf{REPA-P} enhance the generative quality of physics-informed diffusion models across different network architectures?

\subsection{Experimental Settings}

We evaluate REPA-P on four PDE-governed tasks: \textbf{Darcy flow} (steady state, $64{\times}64$), \textbf{Topology optimization} (structural compliance, $64{\times}64$), \textbf{Electrostatic charge potential} (Poisson equation, $64{\times}64$), and \textbf{Turbulent channel flow} (DNS $128{\times}48$ slice). We primarily use a U-Net backbone for comprehensive baseline comparisons, and additionally extend our evaluation to a Diffusion Transformer (DiT) architecture to demonstrate architectural generalizability. For a fair architectural comparison, the DiT backbone used 8 transformer blocks with hidden dimension 256, 8 attention heads, patch size 4, and MLP ratio 4, yielding approximately 10M trainable parameters, comparable to the 32-channel U-Net baseline used for Darcy/turbulent tasks. We attach lightweight projection heads to intermediate layers to decode physical states and enforce PDE residuals. Detailed problem formulations, data generation procedures, architecture specifications, and training hyperparameters are provided in Appendix~\ref{app:exp_details}. For metrics, we report physics residual MAE ($R_{\text{MAE}}$), data reconstruction error (MSE/PSNR), task-specific metrics (Compliance Error for topology), and smoothing constraints for turbulence.

\begin{figure*}[t]
    \centering
    \includegraphics[width=1\linewidth]{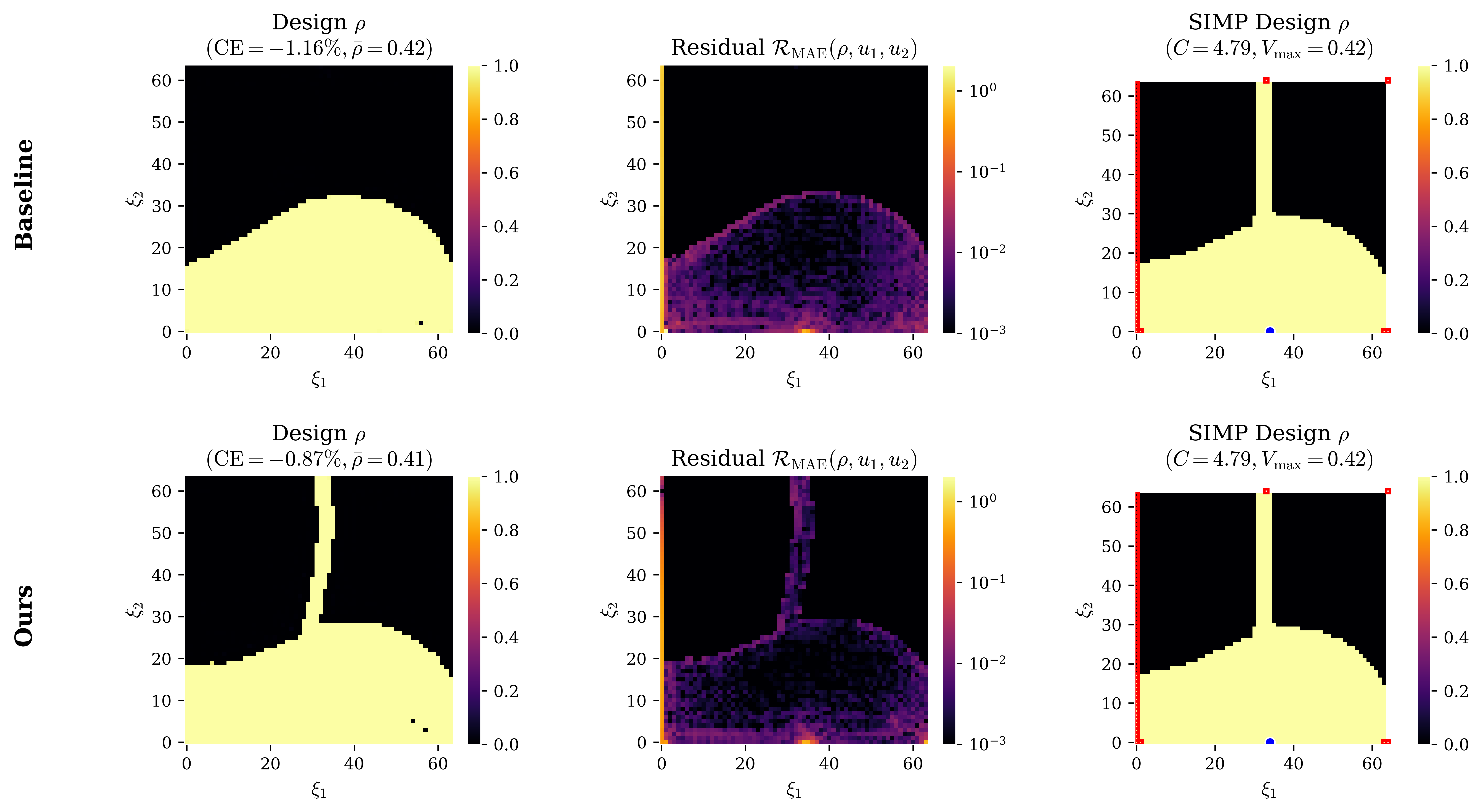}
    \caption{\textbf{Mechanics topology optimization (Baseline vs.\ REPA-P).}
Each row shows (left$\rightarrow$right) the generated density $\rho$ with $\mathrm{CE}$ (\%) and mean $\bar{\rho}$, the equilibrium residual $\mathcal{R}_{\mathrm{MAE}}(\rho,u_1,u_2)$ (log scale; lower is better), and the SIMP reference with compliance $C$ and volume limit $V_{\max}$ (red: displacement BCs; blue: load).
REPA-P yields cleaner slender members and lower residuals than the baseline under the same volume constraint.}
    \label{fig:topology_comparison}
\end{figure*}

\noindent\textbf{Implementation.}
The score networks utilize either the U-Net \cite{unet} or the aforementioned DiT architecture.
For Darcy flow, the network operates on $64\times64$ inputs and outputs that match the grid resolution, allowing the same residual evaluation used during data creation.
For topology optimization, the backbone is extended with additional channels to represent structural density and loads.
For the charge potential problem, the network takes the charge density $\rho$ as conditioning input and generates the electric potential $U$.
For the turbulent channel flow task, the network reconstructs the streamwise velocity fluctuation $u'(x, y, t)$ subject to a no-slip boundary condition at the bottom wall ($y=0$).
REPA-P attaches lightweight $1{\times}1$ projection heads to selected intermediate layers (encoder blocks, bottleneck, or decoder blocks for U-Net; early, middle, or late blocks for DiT).
These heads map intermediate features to the physical quantity space by first projecting to a hidden dimension (128 for the charge problem) and then to the target channel dimension.
We compute PDE and boundary condition residuals on these intermediate predictions and backpropagate them as alignment signals.
The mid-layer alignment loss is weighted by $c_{\text{mid}}$ and combined with the standard diffusion data term and output-level physics term when applicable.
Following \cite{bastek2025physicsinformeddiffusionmodels}, we use $x_0$-prediction with time-dependent residual weighting $\sigma^2(t)$ (see Appendix~\ref{app:virtual} and \ref{app:sigma} for the Bayesian derivation). Adaptive temporal modulation strategies similarly exploit stage-specific supervisory signals~\cite{chen2025ant}.
We train Darcy flow for $120{,}000$ iterations, topology optimization for $150{,}000$ iterations, and the charge potential problem for $120{,}000$ iterations.
The varying training budgets reflect the different convergence characteristics of each task.
For the charge problem, we use $100$ diffusion timesteps with Adam (lr $= 10^{-4}$) while maintaining an exponential moving average of parameters (decay $0.99$).
Inference remains unchanged from the baseline, as the projection heads are discarded after training.

\noindent \textbf{Metrics.}
We report physics residual MAE ($R_{\text{MAE}}$) measuring the mean absolute error of PDE and boundary condition residuals on generated samples.
For Darcy flow, we additionally report test data MSE on $(K,p)$ reconstructions and PSNR on the pressure field $p$ for the conditional reconstruction task.
For topology optimization, we report compliance error (CE\%) measuring how far the generated structure deviates from optimal compliance, and volume fraction error (VFE\%) measuring deviation from the target volume constraint.
For the charge potential problem, we report the physics loss as the mean absolute residual $\mathcal{L}_{\text{phys}} = \text{mean}(|r|)$ where $r = (-\Delta_h U) - \rho$ is the discrete Poisson residual.
For turbulent channel flow, we report PSNR and the physics residual computed via boundary and smoothing regularizers.

\subsection{Main Results and Analysis}

\begin{figure*}
    \centering
    \includegraphics[width=1\linewidth]{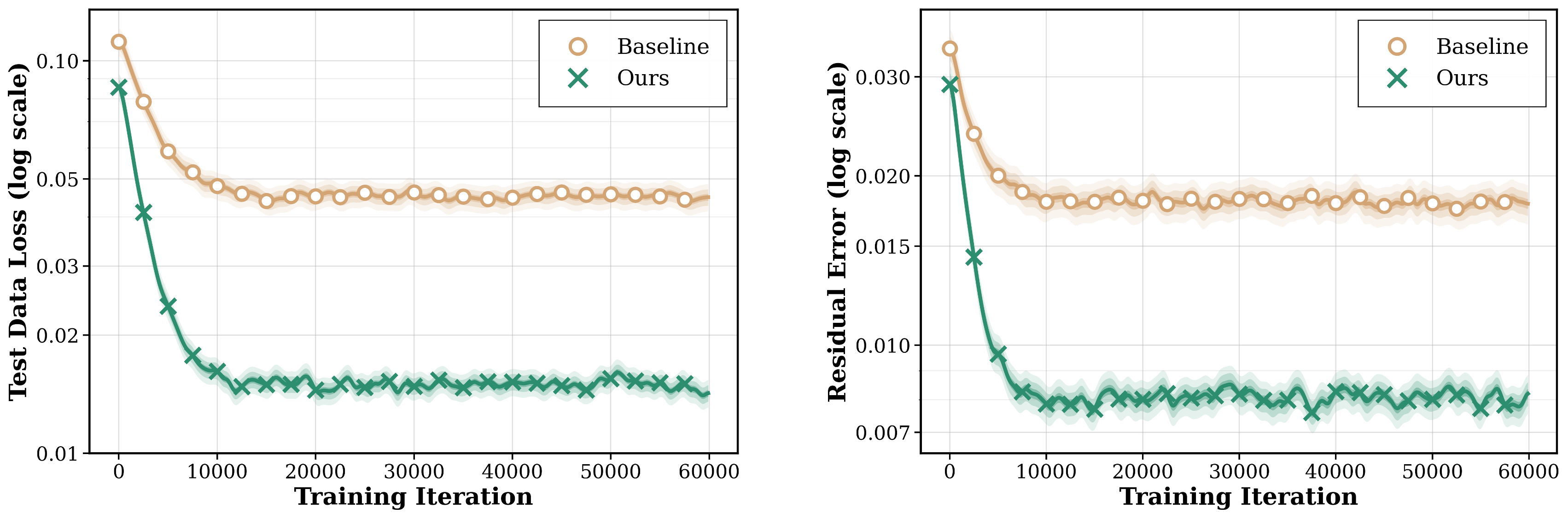}
    \caption{Training convergence curves on Darcy flow (first 60K of 120K total iterations). \textbf{Left:} Test data loss (log scale). \textbf{Right:} Physics residual error (log scale). REPA-P achieves significantly faster convergence and lower final loss on both metrics compared to the baseline. The shaded regions indicate standard deviation across 3 runs. Best viewed in color.}
    \label{fig:convergence}
\end{figure*}

\begin{table*}[t]
  \caption{Summary of main results across all benchmarks using the \textbf{U-Net} architecture, comparing multiple baselines against our proposed REPA-P. We report key metrics for each task: Data/Phys.\ loss for Darcy generation, PSNR/Phys.\ for reconstruction, CE\%/Phys.\ for topology optimization, Data/Phys.\ for charge, and PSNR/Phys.\ for turbulence. Best results in \textbf{bold}. $\downarrow$: lower is better; $\uparrow$: higher is better.}
  \label{tab:main_results_unet}
  \centering
  \resizebox{\textwidth}{!}{
    \setlength{\tabcolsep}{4pt} 
    \renewcommand{\arraystretch}{1.15}
    \begin{tabular}{@{}l cc cc cc cc cc cc@{}}
        \toprule
        & \multicolumn{4}{c}{\textbf{Darcy Flow}} 
        & \multicolumn{4}{c}{\textbf{Topology Optimization}} 
        & \multicolumn{2}{c}{\textbf{Charge}}
        & \multicolumn{2}{c}{\textbf{Turbulence}} \\
        \cmidrule(lr){2-5} \cmidrule(lr){6-9} \cmidrule(lr){10-11} \cmidrule(lr){12-13}
        & \multicolumn{2}{c}{Generation} & \multicolumn{2}{c}{Reconstruction} 
        & \multicolumn{2}{c}{In-Distribution} & \multicolumn{2}{c}{Out-of-Distribution} 
        & \multicolumn{2}{c}{Generation}
        & \multicolumn{2}{c}{Reconstruction} \\
        \cmidrule(lr){2-3} \cmidrule(lr){4-5} \cmidrule(lr){6-7} \cmidrule(lr){8-9} \cmidrule(lr){10-11} \cmidrule(lr){12-13}
        Method 
        & Data$\downarrow$ & Phys.$\downarrow$ 
        & PSNR$\uparrow$ & Phys.$\downarrow$ 
        & CE\%$\downarrow$ & Phys.$\downarrow$
        & CE\%$\downarrow$ & Phys.$\downarrow$
        & Data$\downarrow$ & Phys.$\downarrow$
        & PSNR$\uparrow$ & Phys.$\downarrow$ \\
        \midrule
        
        PG-Diffusion
        & 0.0973 & 0.1041 & 35.89 & 0.1157 & 15.57 & 7.9e-3 & 13.45 & 7.6e-3 & 0.1055 & 0.868 & 37.68 & 2.15e-3 \\
        REPA-P (ours) 
        & \textbf{0.0431} & \textbf{0.0734} & \textbf{37.24} & \textbf{0.065} & \textbf{7.49} & \textbf{5.1e-3} & \textbf{8.21} & \textbf{6.3e-3} & \textbf{0.0543} & \textbf{0.344} & \textbf{38.97} & \textbf{1.41e-3} \\
        \rowcolor[gray]{0.92}
        \textit{Rel.\ Improv.}
        & \textit{55.7\%} & \textit{29.5\%} & \textit{+1.35dB} & \textit{43.6\%} & \textit{51.9\%} & \textit{35.4\%} & \textit{39.0\%} & \textit{17.1\%} & \textit{48.5\%} & \textit{60.4\%} & \textit{+1.29dB} & \textit{34.4\%} \\
        \midrule
        
        DiffusionPDE
        & 0.0879 & 0.1136 & 36.49 & 0.0973 & 17.62 & 9.4e-3 & 19.58 & 9.7e-3 & 0.1243 & 0.966 & 38.33 & 1.73e-3 \\
        REPA-P (ours) 
        & \textbf{0.0342} & \textbf{0.0678} & \textbf{37.98} & \textbf{0.0611} & \textbf{8.40} & \textbf{6.2e-3} & \textbf{9.93} & \textbf{7.2e-3} & \textbf{0.0635} & \textbf{0.454} & \textbf{39.45} & \textbf{1.24e-3} \\
        \rowcolor[gray]{0.92}
        \textit{Rel.\ Improv.}
        & \textit{61.1\%} & \textit{40.3\%} & \textit{+1.49dB} & \textit{37.2\%} & \textit{52.3\%} & \textit{34.0\%} & \textit{49.3\%} & \textit{25.8\%} & \textit{48.9\%} & \textit{53.0\%} & \textit{+1.12dB} & \textit{28.3\%} \\
        \midrule
        
        CoCoGen
        & 0.1231 & 0.1134 & 34.47 & 0.1047 & 20.16 & 2.3e-2 & 17.34 & 1.9e-2 & 0.1967 & 1.327 & 38.40 & 2.30e-3 \\
        REPA-P (ours) 
        & \textbf{0.0921} & \textbf{0.0804} & \textbf{37.05} & \textbf{0.0833} & \textbf{11.26} & \textbf{8.4e-3} & \textbf{10.82} & \textbf{8.1e-3} & \textbf{0.1138} & \textbf{0.550} & \textbf{39.79} & \textbf{1.59e-3} \\
        \rowcolor[gray]{0.92}
        \textit{Rel.\ Improv.}
        & \textit{25.2\%} & \textit{29.1\%} & \textit{+2.58dB} & \textit{20.4\%} & \textit{44.1\%} & \textit{63.5\%} & \textit{37.6\%} & \textit{57.4\%} & \textit{42.1\%} & \textit{58.6\%} & \textit{+1.39dB} & \textit{30.9\%} \\
        \midrule
        
        PIDM
        & 0.0180 & 0.0260 & 36.23 & 0.0234 & 9.24 & 5.2e-3 & 7.93 & 5.1e-3 & 0.0168 & 0.381 & 37.64 & 1.91e-3 \\
        REPA-P (ours) 
        & \textbf{0.0119} & \textbf{0.0143} & \textbf{38.41} & \textbf{0.0142} & \textbf{4.17} & \textbf{4.5e-3} & \textbf{5.05} & \textbf{4.9e-3} & \textbf{0.0081} & \textbf{0.128} & \textbf{39.95} & \textbf{1.75e-3} \\
        \rowcolor[gray]{0.92}
        \textit{Rel.\ Improv.}
        & \textit{33.9\%} & \textit{45.0\%} & \textit{+2.18dB} & \textit{39.3\%} & \textit{54.9\%} & \textit{13.5\%} & \textit{36.3\%} & \textit{3.9\%} & \textit{51.8\%} & \textit{66.4\%} & \textit{+2.31dB} & \textit{8.4\%} \\
        \bottomrule
    \end{tabular}
  }
\end{table*}

\begin{table*}[t]
  \caption{Performance comparison on the \textbf{DiT} architecture (PIDM baseline vs.\ REPA-P). We report data fitting and physics consistency metrics across all benchmarks. REPA-P provides consistent performance gains when transitioning from U-Net to DiT. Best results in \textbf{bold}. $\downarrow$: lower is better; $\uparrow$: higher is better.}
  \label{tab:main_results_dit}
  \centering
  \resizebox{\textwidth}{!}{
    \setlength{\tabcolsep}{4pt}
    \renewcommand{\arraystretch}{1.15}
    \begin{tabular}{@{}l *{12}{c}@{}}
        \toprule
        & \multicolumn{4}{c}{\textbf{Darcy Flow}} 
        & \multicolumn{4}{c}{\textbf{Topology Optimization}} 
        & \multicolumn{2}{c}{\textbf{Charge}}
        & \multicolumn{2}{c}{\textbf{Turbulence}} \\
        \cmidrule(lr){2-5} \cmidrule(lr){6-9} \cmidrule(lr){10-11} \cmidrule(lr){12-13}
        & \multicolumn{2}{c}{Generation} & \multicolumn{2}{c}{Reconstruction} 
        & \multicolumn{2}{c}{In-Distribution} & \multicolumn{2}{c}{Out-of-Distribution} 
        & \multicolumn{2}{c}{Generation} 
        & \multicolumn{2}{c}{Reconstruction} \\ 
        \cmidrule(lr){2-3} \cmidrule(lr){4-5} \cmidrule(lr){6-7} \cmidrule(lr){8-9} \cmidrule(lr){10-11} \cmidrule(lr){12-13}
        Method 
        & Data$\downarrow$ & Phys.$\downarrow$ 
        & PSNR$\uparrow$ & Phys.$\downarrow$ 
        & CE\%$\downarrow$ & Phys.$\downarrow$
        & CE\%$\downarrow$ & Phys.$\downarrow$
        & Data$\downarrow$ & Phys.$\downarrow$
        & PSNR$\uparrow$ & Phys.$\downarrow$ \\
        \midrule
        PIDM
        & 0.0831 & 0.0719 & 35.34 & 0.0692 & 14.93 & 1.38e-3 & 10.11 & 1.42e-3 & 0.0135 & 0.367 & 38.16 & 1.81e-3 \\
        REPA-P (ours) 
        & \textbf{0.0352} & \textbf{0.0534} & \textbf{37.39} & \textbf{0.0378} & \textbf{6.79} & \textbf{1.31e-3} & \textbf{5.32} & \textbf{1.30e-3} & \textbf{0.0093} & \textbf{0.220} & \textbf{39.65} & \textbf{1.35e-3} \\
        \midrule
        \rowcolor[gray]{0.92}
        \textit{Rel.\ Improv.}
        & \textit{57.64\%} & \textit{25.73\%} & \textit{+2.05dB} & \textit{45.4\%} & \textit{54.5\%} & \textit{5.1\%} & \textit{47.4\%} & \textit{8.5\%} & \textit{31.1\%} & \textit{40.0\%} & \textit{+1.49dB} & \textit{25.4\%} \\
        \bottomrule
    \end{tabular}
  }
\end{table*}

Table~\ref{tab:main_results_unet} reports the performance of REPA-P with the U-Net architecture against four baselines (PG-Diffusion, DiffusionPDE, CoCoGen, and PIDM). REPA-P consistently outperforms all baselines on both data fidelity and physics consistency metrics across every benchmark. Table~\ref{tab:main_results_dit} shows that these gains transfer to the DiT architecture: with comparable parameter counts, intermediate alignment reduces Darcy flow generation data loss by $57.64\%$ and physics loss by $25.73\%$ relative to the PIDM baseline. On topology optimization, REPA-P reduces compliance error under both in-distribution and out-of-distribution boundary conditions regardless of the chosen backbone. 

\noindent\textbf{Darcy Flow.}
For unconditional generation, REPA-P reduces both $R_{\text{MAE}}$ and test MSE on the held-out set relative to the PIDM baseline.
The generated pairs $(K,p)$ exhibit better adherence to the Darcy PDE while maintaining fidelity to the training distribution.
Figure~\ref{fig:darcy_comparison} shows representative samples, with REPA-P achieving visibly lower residual magnitudes across the domain.
Training converges faster and more stably (Figure~\ref{fig:convergence}), and generalization under shifted boundary conditions improves.
Inference cost is identical to the standard model since the projection heads are discarded after training.

For conditional generation with sparse reconstruction, we reveal $30\%$ of the target pressure field $p^\star$ as observations via a binary mask $\mathbf{M}\in\{0,1\}^{n\times n}$ with $\frac{1}{n^2}\sum \mathbf{M}=0.3$.
Training augments the diffusion loss with supervision on observed entries and REPA-P alignment on intermediate layers; at inference, observed entries are clamped at each denoising step.
REPA-P lowers the masked $\ell_2$ error on unobserved entries and decreases $R_{\text{MAE}}$, with gains persisting across different observation ratios.
This demonstrates that intermediate alignment yields more physically consistent reconstructions without extra test-time optimization.

\noindent\textbf{Topology Optimization.}
Table~\ref{tab:topology_results} reports performance under both seen and unseen boundary conditions.
REPA-P consistently achieves the lowest physics residual ($R_{\mathrm{MAE}}$) and compliance error (CE\%), demonstrating that mid-layer alignment improves both physical consistency and task-specific performance.
Figure~\ref{fig:topology_comparison} provides qualitative examples, showing that REPA-P produces structures with lower compliance error and reduced physics violations.
On the in-distribution test set, REPA-P reduces compliance error substantially; on the challenging out-of-distribution set with unseen boundary conditions, gains persist.
These results indicate that mid-layer alignment encourages the network to internalize physical constraints rather than memorizing task-specific patterns, enabling better transfer to novel boundary conditions.

\noindent\textbf{Turbulent Channel Flow.}
To evaluate REPA-P on a more complex fluid dynamics scenario, we introduce a turbulent channel flow benchmark. The task is to reconstruct a DNS $128 \times 48$ $x$-$y$ slice of the streamwise velocity fluctuation $u'(x, y, t)$. Physics consistency is enforced through a no-slip boundary condition at the bottom wall ($u'(x, 0, t) = 0$) and an interior Laplacian regularizer. As shown in Tables~\ref{tab:main_results_unet} and \ref{tab:main_results_dit}, REPA-P significantly enhances both reconstruction quality and physical consistency. Using the U-Net backbone, REPA-P (bottleneck alignment) improves PSNR from 37.64 dB to 39.95 dB while simultaneously reducing the physics residual. The improvements seamlessly translate to the DiT architecture, where REPA-P (middle alignment) boosts PSNR from 38.16 dB to 39.65 dB and lowers the physics residual by over 25\%. These results demonstrate that intermediate physical supervision remains effective and robust even on substantially more complex turbulent-flow benchmarks.

\begin{table}[t]
  \caption{Topology optimization results under in-distribution (ID) and out-of-distribution (OOD) boundary conditions. \textbf{Bold}: best results among single-position ablations; \underline{underline}: second-best. $\downarrow$: lower is better.}
  \label{tab:topology_results}
  \centering
  \small
  \setlength{\tabcolsep}{2.5pt}
  \renewcommand{\arraystretch}{1.1}
  \begin{tabular}{@{}lcccccc@{}}
    \toprule
    \multirow{2}{*}{Method}
    & \multicolumn{3}{c}{In-Distribution}
    & \multicolumn{3}{c}{Out-of-Distribution} \\
    \cmidrule(lr){2-4}\cmidrule(lr){5-7}
    & Phys. & CE\% & VFE\%
    & Phys. & CE\% & VFE\% \\
    \midrule
    Baseline      & 5.2e-3 & 9.24 & 3.38 & 5.1e-3 & 7.93 & 3.20 \\
    + Encoder     & \underline{4.5e-3} & \underline{4.17} & \underline{3.02} & 5.3e-3 & 9.07 & \underline{3.02} \\
    + Bottleneck  & 5.3e-3 & 7.21 & 3.25&  \underline{4.9e-3}&  \underline{5.05} & 3.22 \\
    + Decoder     & 6.7e-3 & 8.67 & 3.64 & 6.2e-3 & 10.02 & 3.47 \\
    + Output      & 5.5e-3 & 7.47& 3.96 & 5.0e-3 & 7.98& 3.42 \\
    \midrule
    REPA-P (ours) & \textbf{4.5e-3} & \textbf{4.17} & \textbf{3.02} & \textbf{4.9e-3} & \textbf{5.05} & 3.02 \\
    \bottomrule
  \end{tabular}
\end{table}

\noindent\textbf{Electrostatic Charge Potential.}
On the electrostatic charge potential problem, where the model generates the potential field $U$ conditioned on the charge density $\rho$, REPA-P reduces physics loss by up to $66.4\%$ relative to the PIDM baseline.
This conditional generation task tests whether intermediate alignment improves the physical consistency of predicted solutions to the Poisson equation.
The result validates our core hypothesis: enforcing physical decodability at intermediate layers provides meaningful gradient signals that complement output-level constraints, even when the source term is given as conditioning input. Additional qualitative results for all tasks are provided in Appendix~\ref{app:vis}.

\subsection{Ablation Study}

We conduct ablation studies to investigate the impact of alignment position on model performance.
Table~\ref{tab:darcy_results} presents results for Darcy flow, Table~\ref{tab:topology_results} shows topology optimization results, and Table~\ref{tab:physics_loss} presents detailed results for the charge potential problem.

\noindent\textbf{Impact of Alignment Position.}
We systematically evaluate placing projection heads at different positions within the U-Net architecture: encoder blocks, bottleneck, decoder blocks, and output layer.
To isolate the effect of alignment position, we follow a two-stage experimental procedure: (1) we first compare different alignment positions using fixed default hyperparameters ($c_{\text{mid}}=0.01$, projection head hidden dimension $128$), and (2) we then perform hyperparameter tuning on the best-performing position identified in stage (1).
This design allows us to fairly assess the impact of alignment position independently of task-specific hyperparameter optimization.

\begin{figure}[t]
    \centering
    \includegraphics[width=1\linewidth]{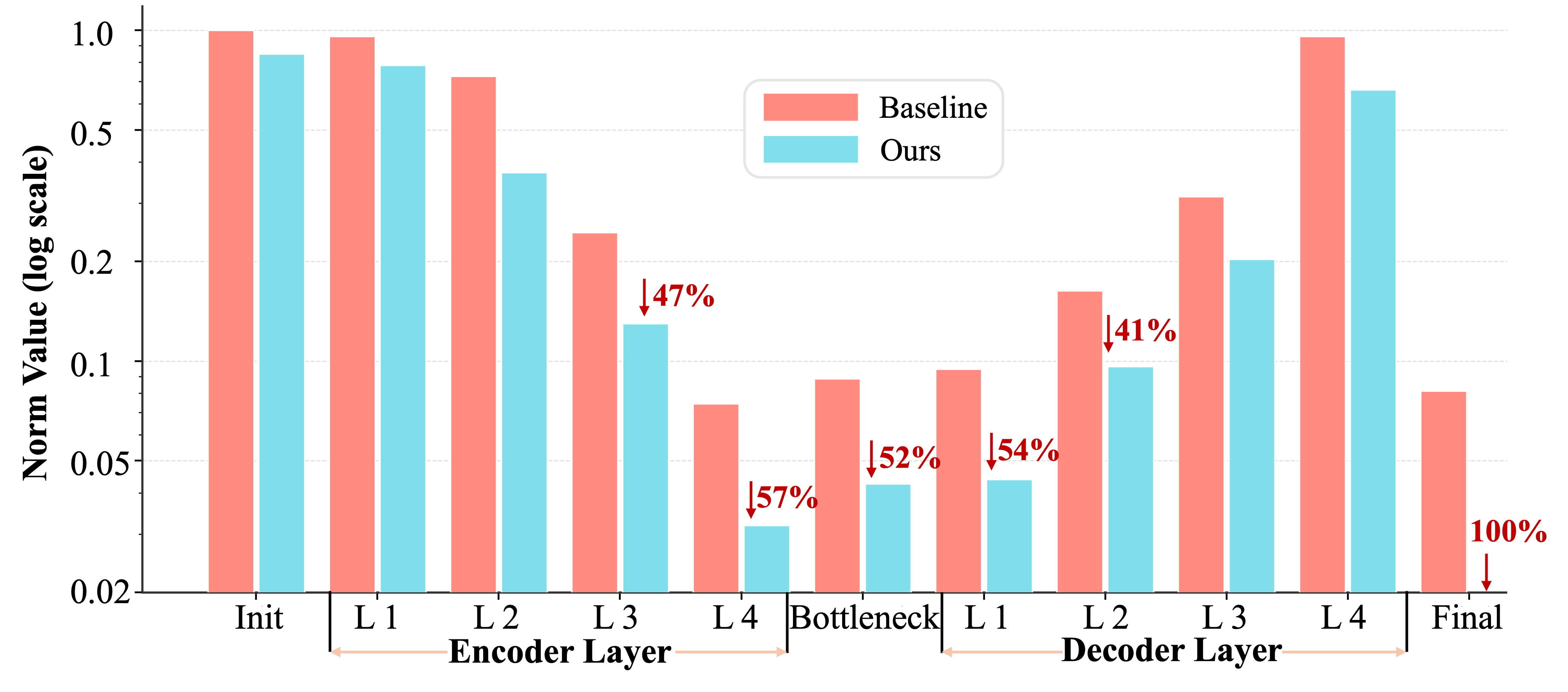}
    \caption{Physics residual (normalized, log scale) across U-Net layers. \textbf{Baseline} (red) applies physics loss only at output; \textbf{Ours} (blue) applies REPA-P alignment at intermediate layers, achieving 47\%-100\% reduction.}
    \label{fig:phy_structure}
\end{figure}

\begin{table}[t]
  \caption{Darcy Flow results comparing different REPA-P alignment positions. \textbf{Bold}: best results among single-position ablations; \underline{underline}: second-best. $\downarrow$: lower is better; $\uparrow$: higher is better.}
  \label{tab:darcy_results}
  \centering
  \small
  \setlength{\tabcolsep}{3pt}
  \renewcommand{\arraystretch}{1.1}
  \begin{tabular}{@{}lcccc@{}}
    \toprule
    \multirow{2}{*}{Method} & \multicolumn{2}{c}{Generation} & \multicolumn{2}{c}{Reconstruction} \\
    \cmidrule(lr){2-3}\cmidrule(lr){4-5}
    & Data$\downarrow$ & Phys.$\downarrow$ & PSNR$\uparrow$ & Phys.$\downarrow$ \\
    \midrule
    Baseline    & 0.0180 & 0.0260 & 36.23 & 0.0234 \\
    + Encoder   & 0.0133 & 0.0194 & 37.22 & 0.0201 \\
    + Bottleneck& \underline{0.0119} & \underline{0.0143} & \underline{38.41} & 0.0173 \\
    + Decoder   & 0.0126 & 0.0158 & 38.01 & 0.0182 \\
    + Output    & 0.0162 & 0.0194 & 38.21 & \underline{0.0142} \\
    \midrule
    REPA-P (ours) & \textbf{0.0119} & \textbf{0.0143} & \textbf{38.41} & \textbf{0.0142} \\
    \bottomrule
  \end{tabular}
\end{table}

For Darcy flow (Table~\ref{tab:darcy_results}), stage (1) position ablations show that bottleneck alignment achieves the best data loss and physics consistency for unconditional generation, while output-level alignment performs best on reconstruction physics, indicating complementary strengths across positions.
We therefore choose the bottleneck and tune $c_{\text{mid}}$ (stage 2), obtaining REPA-P ($c_{\text{mid}}=0.1$) with the best overall trade-off.

\begin{table}[t]
  \caption{Ablation study on the electrostatic charge potential problem: data loss (L2 error) and physics loss (residual) across different alignment positions. \textbf{Bold}: best results among single-position ablations; \underline{underline}: second-best. $\downarrow$: lower is better.}
  \label{tab:physics_loss}
  \centering
  \small
  \setlength{\tabcolsep}{8pt}
  \renewcommand{\arraystretch}{1.1}
  \begin{tabular}{@{}lcc@{}}
    \toprule
    Method & Data Loss$\downarrow$ & Phys.$\downarrow$ \\
    \midrule
    Baseline      & 1.680e-2 & 0.381 \\
    + Encoder     & 9.944e-3  & 0.186 \\
    + Bottleneck  &  \underline{8.099e-3}&\underline{0.128}\\
    + Decoder     & 9.802e-3 & 0.185\\
    + Output      & 1.041e-2 & 0.188 \\
    \midrule
    REPA-P (ours) & \textbf{8.099e-3} & \textbf{0.128} \\
    \bottomrule
  \end{tabular}
\end{table}

Table~\ref{tab:physics_loss} details the ablation on the charge potential problem. Bottleneck alignment yields the best performance, reducing data loss by $51.8\%$ and physics loss by $66.4\%$ compared to the baseline. Encoder and decoder alignments also provide substantial improvements, consistently outperforming output-only alignment. This confirms that mid-layer supervision effectively captures essential physical relationships in the compressed bottleneck representation. Hyperparameter sensitivity is analyzed in the following section.

\noindent\textbf{Cross-task Consistency.}
Across all tasks, mid-layer alignment consistently outperforms output-only alignment and baselines. Figure~\ref{fig:phy_structure} confirms that REPA-P enforces physical decodability at intermediate layers, reducing residuals by 47\%--100\% and preventing deferred physics reasoning. While different positions offer complementary strengths (Tables~\ref{tab:darcy_results} and \ref{tab:topology_results}; DiT position ablations in Appendix~\ref{app:ablation_dit_pos}), optimized bottleneck alignment achieves the best overall trade-off between expressiveness and physical interpretability. These results validate our core hypothesis: intermediate physical decoding creates short-path gradient signals that break shortcut learning (see Appendix~\ref{app:gradflow} for a formal gradient flow analysis).

\noindent\textbf{Hyperparameter Sensitivity.}
Following the selection of bottleneck alignment in stage (1), we investigate the sensitivity of REPA-P to key hyperparameters in stage (2), including the physics loss weight $c_{\text{mid}}$ and projection head hidden dimension. REPA-P demonstrates robustness across a wide range, with optimal $c_{\text{mid}}$ spanning $0.005$--$0.1$ across tasks and all tested head dimensions (32--256) substantially outperforming the baseline (see Appendix~\ref{app:ablation_weight}--\ref{app:ablation_dim} for per-task sensitivity).

\section{Conclusion}

In this paper, we address the critical issue of shortcut learning in scientific diffusion models, where networks memorize surface statistics rather than internalizing governing laws. We propose \textbf{REPA-P}, a representation alignment framework that enforces physical decodability directly within the model's intermediate layers. By supervising latent features with first-principles PDE residuals, REPA-P compels the network to ``think in physics,'' effectively breaking these spurious correlations. Our experiments across four PDE benchmarks demonstrate that this internal alignment significantly accelerates convergence and enhances out-of-distribution robustness without inference overhead, advancing toward scientifically trustworthy generative models whose internal representations reflect governing physical laws.

\section*{Acknowledgements}
This work was supported by the Guangdong Basic and Applied Basic Research Foundation (Grant No. 2026A1515011579), the HKUST-HKUST(GZ) 1+1+1 Joint Funding Program (Grant No. C\_2025\_031), and the Guangzhou-HKUST(GZ) Joint Funding Program (Grant No. 2023A03J0008), Education Bureau of Guangzhou Municipality. This work was also supported by Jiangsu Industrial Technology Research Institute (JITRI) and Wuxi National High-Tech District (WND).

\section*{Impact Statement}
This paper advances machine learning methods for AI for Science. We propose REPA-P to improve the physical reliability and out-of-distribution robustness of generative models used in scientific simulation and engineering design (e.g., fluid dynamics and structural topology optimization) by mitigating shortcut learning and encouraging physics-consistent internal representations. If adopted, the method may enable more trustworthy and efficient surrogate modeling pipelines, potentially benefiting downstream scientific discovery and industrial design. REPA-P assumes access to known governing equations and differentiable residual operators, and it introduces additional residual evaluations during training, which may increase computational cost. Our empirical validation is currently limited to the studied benchmarks and discretizations. We plan to extend the approach to more complex and practical settings, such as 3D problems, unstructured meshes, coupled multi-physics systems, and larger-scale engineering workloads, as well as to investigate ways to reduce the training overhead and handle partially unknown or noisy physical constraints.

\newpage

\bibliographystyle{icml2026}
\bibliography{ref}

\newpage
\onecolumn
\appendix
\renewcommand{\thesection}{\Alph{section}}
\setcounter{section}{0}
\renewcommand{\thefigure}{\Alph{section}.\arabic{figure}}
\setcounter{figure}{0}
\renewcommand{\thetable}{\Alph{section}.\arabic{table}}
\setcounter{table}{0}
\renewcommand{\theequation}{\Alph{section}.\arabic{equation}}
\setcounter{equation}{0}

\section{Experimental Details}
\label{app:exp_details}

This section provides additional details on the experimental setup, residual computation, and evaluation metrics for each benchmark task.

\subsection{Darcy Flow}

\paragraph{Problem Formulation.}
We study steady two-dimensional Darcy flow governed by the elliptic PDE
\begin{equation}
-\nabla \cdot (K(\xi) \nabla p(\xi)) = f_s(\xi), \quad \xi \in \Omega = [0,1]^2,
\label{eq:apx-darcy-pde}
\end{equation}
where $K(\xi) > 0$ is the permeability field, $p(\xi)$ is the pressure field, and $f_s(\xi)$ is a source term. We impose homogeneous Neumann boundary conditions $\partial p / \partial n = 0$ on $\partial\Omega$. Since the PDE depends only on pressure gradients, the solution is determined up to an additive constant; we enforce a zero-mean constraint on $p$ for numerical stability (see Eq.~\ref{eq:mean-fix} in the main text).

\paragraph{Data Generation.}
The permeability field $K(\xi)$ is sampled from a log-Gaussian random field with Mat\'ern covariance kernel \cite{Zhu_2018}. Specifically, we draw $\log K \sim \mathcal{GP}(0, k_\nu)$ where $k_\nu$ is the Mat\'ern kernel with smoothness parameter $\nu = 2.5$ and length scale $\ell = 0.1$. The pressure field $p(\xi)$ is then obtained by solving \eqref{eq:apx-darcy-pde} numerically on a $64 \times 64$ uniform grid using second-order central finite differences. This yields paired samples $(K, p) \in \mathbb{R}^{64 \times 64 \times 2}$.

\paragraph{Residual Computation.}
The discrete PDE residual is computed using second-order central finite differences. Expanding the divergence operator $-\nabla \cdot (K \nabla p)$ via the product rule yields
\begin{equation}
-\nabla \cdot (K \nabla p) = -K \Delta p - \nabla K \cdot \nabla p,
\label{eq:apx-darcy-expand}
\end{equation}
where $\Delta p = \partial_{\xi_1}^2 p + \partial_{\xi_2}^2 p$ is the Laplacian. For interior grid points $(i,j)$ with $1 \le i,j \le n-2$ (where $n=64$), we discretize each term using second-order central differences:
\begin{equation}
\begin{aligned}
R_{\text{PDE}}[i,j] = {} & -K_{i,j} \left( \frac{p_{i+1,j} - 2p_{i,j} + p_{i-1,j}}{h^2} + \frac{p_{i,j+1} - 2p_{i,j} + p_{i,j-1}}{h^2} \right) \\
& - \frac{K_{i+1,j} - K_{i-1,j}}{2h} \cdot \frac{p_{i+1,j} - p_{i-1,j}}{2h} \\
& - \frac{K_{i,j+1} - K_{i,j-1}}{2h} \cdot \frac{p_{i,j+1} - p_{i,j-1}}{2h} - f_s[i,j],
\end{aligned}
\label{eq:apx-darcy-residual}
\end{equation}
where $h = 1/(n-1)$ is the grid spacing.

For homogeneous Neumann boundary conditions $\partial p / \partial n = 0$, the residual on each boundary enforces vanishing normal derivatives:
\begin{equation}
\begin{aligned}
R_{\text{BC}}^{\text{top}}[i] &= \frac{p_{i,0} - p_{i,1}}{h}, \quad 
R_{\text{BC}}^{\text{bottom}}[i] = \frac{p_{i,n-1} - p_{i,n-2}}{h}, \\
R_{\text{BC}}^{\text{left}}[j] &= \frac{p_{0,j} - p_{1,j}}{h}, \quad 
R_{\text{BC}}^{\text{right}}[j] = \frac{p_{n-1,j} - p_{n-2,j}}{h}.
\end{aligned}
\label{eq:apx-darcy-bc}
\end{equation}

The total residual vector is $R(K,p) = [R_{\text{PDE}}; R_{\text{BC}}^{\text{top}}; R_{\text{BC}}^{\text{bottom}}; R_{\text{BC}}^{\text{left}}; R_{\text{BC}}^{\text{right}}] \in \mathbb{R}^{d_r}$, where $d_r = (n-2)^2 + 4n$.

\paragraph{Evaluation Metrics.}
The physics residual MAE ($R_{\text{MAE}}$) measures the mean absolute error of the PDE and boundary condition residuals:
\begin{equation}
R_{\text{MAE}} = \frac{1}{N_{\text{test}}} \sum_{i=1}^{N_{\text{test}}} \frac{1}{d_r} \| R(K^{(i)}, p^{(i)}) \|_1.
\label{eq:apx-rmae}
\end{equation}
The data loss measures the mean squared error between generated and ground-truth fields:
\begin{equation}
\text{Data Loss} = \frac{1}{N_{\text{test}}} \sum_{i=1}^{N_{\text{test}}} \left( \| K^{(i)} - \hat{K}^{(i)} \|_2^2 + \| p^{(i)} - \hat{p}^{(i)} \|_2^2 \right).
\label{eq:apx-data-loss}
\end{equation}
For conditional reconstruction, we additionally report PSNR on the pressure field:
\begin{equation}
\text{PSNR} = 10 \log_{10} \left( \frac{\max(p)^2}{\text{MSE}(p, \hat{p})} \right),
\label{eq:apx-psnr}
\end{equation}
and the masked $\ell_2$ error on unobserved entries: $\ell_2^{\text{masked}} = \| (1 - \mathbf{M}) \odot (p - \hat{p}) \|_2 / \| 1 - \mathbf{M} \|_0$, where $\odot$ denotes element-wise multiplication.

\paragraph{Training Details.}
We use a U-Net backbone with 4 encoder blocks and 4 decoder blocks, each containing two residual blocks with group normalization and SiLU activations. The base channel dimension is 32, doubling at each downsampling stage. Skip connections link corresponding encoder and decoder blocks. The diffusion process uses $T=1000$ timesteps with a cosine noise schedule. We train for 120,000 iterations using Adam with learning rate $10^{-4}$ and batch size 32.

For REPA-P, we attach $1 \times 1$ convolutional projection heads to the bottleneck and selected decoder blocks. Each head consists of a $1 \times 1$ convolution mapping from the hidden dimension to 2 output channels (for $K$ and $p$), followed by bilinear upsampling to the target resolution $64 \times 64$. The mid-layer alignment weight is set to $c_{\text{mid}} = 0.1$ for the main results, which achieves optimal performance. Ablation studies (Section~\ref{app:ablation_weight}) explore the sensitivity to this hyperparameter across different values.

\subsection{Topology Optimization}

\paragraph{Problem Formulation.}
Two-dimensional structural topology optimization seeks to find an optimal material distribution $\rho(\xi) \in [0,1]$ that minimizes compliance (maximizes stiffness) subject to mechanical equilibrium and a volume constraint:
\begin{equation}
\min_{\rho} \quad C(\rho) = \mathbf{f}^\top \mathbf{u}, \quad \text{s.t.} \quad \mathbf{K}(\rho) \mathbf{u} = \mathbf{f}, \quad \int_\Omega \rho \, d\xi \le V_{\text{target}},
\label{eq:apx-topo-problem}
\end{equation}
where $\mathbf{K}(\rho)$ is the global stiffness matrix assembled from element stiffnesses $\mathbf{k}_e(\rho_e) = \rho_e^p \mathbf{k}_e^0$ (SIMP penalization with $p=3$), $\mathbf{u}$ is the displacement vector, $\mathbf{f}$ is the external load vector, and $V_{\text{target}}$ is the target volume fraction.

\paragraph{Data Generation.}
We follow the dataset from \cite{bastek2025physicsinformeddiffusionmodels}, which contains 30,000 optimized structures \cite{mazé2022diffusionmodelsbeatgans} on a $64 \times 64$ grid. Each sample consists of a density field $\rho \in [0,1]^{64 \times 64}$, boundary condition indicators (fixed supports), and load vectors. The dataset covers diverse boundary conditions and target volume fractions ranging from 0.3 to 0.6. The training set contains 24,000 samples, validation set 3,000 samples, and test sets 1,500 samples each for in-distribution (ID) and out-of-distribution (OOD) boundary conditions.

\paragraph{Residual Computation.}
The physics residual for topology optimization consists of three components. The mechanical equilibrium residual measures violation of the finite element equation:
\begin{equation}
R_{\text{eq}} = \| \mathbf{K}(\rho) \mathbf{u} - \mathbf{f} \|_2.
\label{eq:apx-topo-eq}
\end{equation}
The volume constraint residual penalizes deviation from the target volume fraction:
\begin{equation}
R_{\text{vol}} = \max\left( 0, \frac{1}{|\Omega|} \sum_{i,j} \rho_{i,j} - V_{\text{target}} \right).
\label{eq:apx-topo-vol}
\end{equation}
The density bound residual ensures $\rho \in [0,1]$:
\begin{equation}
R_{\text{bound}} = \| \max(0, -\rho) \|_2 + \| \max(0, \rho - 1) \|_2.
\label{eq:apx-topo-bound}
\end{equation}
The total physics residual is $R_{\text{MAE}} = R_{\text{eq}} + \lambda_{\text{vol}} R_{\text{vol}} + \lambda_{\text{bound}} R_{\text{bound}}$ with $\lambda_{\text{vol}} = \lambda_{\text{bound}} = 1$.

\paragraph{Evaluation Metrics.}
The compliance error (CE\%) measures relative deviation from optimal compliance:
\begin{equation}
\text{CE\%} = \frac{1}{N_{\text{test}}} \sum_{i=1}^{N_{\text{test}}} \frac{|C(\hat{\rho}^{(i)}) - C(\rho^{(i)}_{\text{opt}})|}{C(\rho^{(i)}_{\text{opt}})} \times 100\%.
\label{eq:apx-ce}
\end{equation}
The volume fraction error (VFE\%) measures deviation from target volume:
\begin{equation}
\text{VFE\%} = \frac{1}{N_{\text{test}}} \sum_{i=1}^{N_{\text{test}}} \left| \frac{1}{|\Omega|} \sum_{i,j} \hat{\rho}_{i,j} - V_{\text{target}} \right| \times 100\%.
\label{eq:apx-vfe}
\end{equation}

\paragraph{Conditional Generation.}
Topology optimization is formulated as a conditional generation task, where the model generates optimal density fields $\rho$ conditioned on boundary conditions and load configurations. The conditioning information is provided as additional input channels concatenated with the noisy density field during training and inference. Specifically, the 4 input channels consist of: (1) the noisy density field $\rho_t$, (2) x-component of the load vector, (3) y-component of the load vector, and (4) boundary condition indicator (binary mask indicating fixed supports). This channel-wise concatenation allows the U-Net to learn the mapping from boundary conditions and loads to optimal material distributions.

\paragraph{Training Details.}
The U-Net backbone is extended to 4 input channels (density, x-load, y-load, boundary indicator) and 1 output channel (density). We use the same architecture as Darcy flow with base channel dimension 128. The diffusion process uses $T=1000$ timesteps with a cosine noise schedule. Training runs for 150,000 iterations with Adam, learning rate $5 \times 10^{-5}$, and batch size 32. The longer training budget compared to Darcy flow reflects the increased complexity of topology optimization, which involves multiple interacting constraints (mechanical equilibrium, volume fraction, and density bounds) and a significantly larger network (136M vs 9M parameters).

For REPA-P, projection heads are attached to the bottleneck layer. The heads map bottleneck features (512 channels) to the density field via $1 \times 1$ convolution followed by bilinear upsampling. The mid-layer alignment weight is set to $c_{\text{mid}} = 5 \times 10^{-3}$ (0.005) for the main results, which provides the best balance between compliance error and physics consistency. The output physics weight is $c_{\text{out}} = 10^{-3}$. Ablation studies (Section~\ref{app:ablation_weight}) demonstrate the sensitivity to different weight values.

\subsection{Electrostatic Charge Potential}

\paragraph{Problem Formulation.}
We study a two-dimensional electrostatic charge--potential problem on the square domain $\Omega=[0,1]^2$, represented on a $P \times P$ grid with $P=64$ including boundary pixels, under homogeneous Dirichlet boundary conditions $U|_{\partial\Omega}=0$. The governing equation is the Poisson equation in normalized units:
\begin{equation}
(-\Delta) U(\xi) = \rho(\xi), \quad \xi \in \Omega, \quad U|_{\partial\Omega} = 0,
\label{eq:apx-poisson}
\end{equation}
where $U$ is the electric potential and $\rho$ is the (signed) charge density.

\paragraph{Discretization and Grid Spacing.}
We represent the solution on a full $P \times P$ grid (including boundary pixels) with $P=64$ and grid spacing $h = 1/(P-1) = 1/63$. The interior has size $N \times N$ with $N = P - 2 = 62$. We use the standard 5-point finite-difference Laplacian on the interior:
\begin{equation}
(\Delta_h U)_{i,j} = \frac{U_{i+1,j} + U_{i-1,j} + U_{i,j+1} + U_{i,j-1} - 4U_{i,j}}{h^2},
\end{equation}
and define $(-\Delta_h U) = -(\Delta_h U)$.

\paragraph{Data Generation.}
Each sample is generated synthetically by placing $K=2$ random point charges on the interior grid. Each charge has magnitude $q_k \sim \text{Uniform}(0.5, 1.5)$ with random sign. Each charge is deposited onto the nearest single interior grid node $(i_k, j_k)$, producing a sparse discrete charge density:
\begin{equation}
\rho_{i_k, j_k} \leftarrow \rho_{i_k, j_k} + \frac{q_k}{h^2},
\label{eq:apx-charge-density}
\end{equation}
with all other entries zero. We then solve the discrete Poisson system $(-\Delta_h) U = \rho$ on interior nodes using a discrete sine transform (DST) based solver (diagonalizing the Laplacian in the sine basis). The interior solution is embedded into a full $64 \times 64$ grid by setting boundary values to zero, yielding paired fields $(\rho, U)$ where $\rho \in \mathbb{R}^{64 \times 64}$ serves as the conditioning input and $U \in \mathbb{R}^{64 \times 64}$ is the target output.

\paragraph{Residual Computation.}
Given the predicted potential $\hat{U}$ and the conditioning charge density $\rho$, the discrete Poisson residual uses the standard 5-point finite difference Laplacian:
\begin{equation}
R[i,j] = (-\Delta_h \hat{U})_{i,j} - \rho_{i,j},
\label{eq:apx-poisson-residual}
\end{equation}
for interior points $1 \le i,j \le P-2$. Boundary residuals enforce homogeneous Dirichlet conditions: $R_{\text{BC}}[i,j] = \hat{U}_{i,j}$ for $(i,j) \in \partial\Omega_h$. Note that the charge density $\rho$ is the conditioning input (not predicted), so the residual measures how well the predicted potential $\hat{U}$ satisfies the Poisson equation for the given source term.

\paragraph{Evaluation Metric.}
The reported physics loss is the mean absolute residual over all tensor entries:
\begin{equation}
\mathcal{L}_{\text{phys}} = \text{mean}(|R|),
\label{eq:apx-poisson-loss}
\end{equation}
i.e., the mean absolute value of the residual tensor for each sample, averaged over the batch.

\paragraph{Conditional Generation.}
The electrostatic charge potential problem is formulated as a conditional generation task, where the model generates the electric potential field $U$ conditioned on the charge density field $\rho$. This represents a forward problem: given the source term $\rho$ (encoding charge positions and magnitudes), predict the resulting potential $U$ that satisfies the Poisson equation $(-\Delta)U = \rho$. The conditioning information $\rho$ is provided as an additional input channel concatenated with the noisy potential field during training. At inference, the model takes a charge density field $\rho$ as input and generates the corresponding potential $U$ through the reverse diffusion process. The physics constraint is enforced through the residual loss, which ensures that the generated potential $U$ is consistent with the input charge density $\rho$ according to the governing PDE.

\paragraph{Training Details.}
The score network is a U-Net operating on 2-channel $64 \times 64$ inputs (noisy potential $U_t$ concatenated with the conditioning charge density $\rho$) and producing 1-channel output (predicted potential $U$). We use 3 encoder and 3 decoder blocks with base channel dimension 32. The diffusion process uses $T=100$ timesteps with mean $x_0$ estimation (no DDIM sampling).

Training minimizes a weighted combination of the diffusion data term and a physics-informed virtual likelihood based on the Poisson residual: $\mathcal{L} = c_{\text{data}} \mathcal{L}_{\text{data}} + c_{\text{residual}} \mathcal{L}_{\text{residual}}$ with $c_{\text{data}} = 1$ and $c_{\text{residual}} = 10^{-2}$. The residual loss evaluates $(-\Delta_h \hat{U}) - \rho$ where $\hat{U}$ is the predicted potential and $\rho$ is the conditioning charge density. No gradient guidance or correction steps are used during inference. We train for 120,000 iterations with Adam (lr $= 10^{-4}$), batch size 32, and maintain an EMA of parameters with decay 0.99. We construct training and validation splits with 200,000 and 2,048 samples respectively.

For REPA-P, bottleneck projection heads with hidden dimension 128 are enabled with an additional physics loss weight $c_{\text{projection}} = 10^{-2}$. Each head consists of two $1 \times 1$ convolutions with ReLU activation, mapping to 1 output channel (potential $U$) followed by bilinear upsampling. The physics residual is computed using the predicted potential and the conditioning charge density $\rho$. The projection heads are discarded after training, so inference cost remains unchanged.

\subsection{Turbulent Channel Flow}
\label{app:exp_turbulence}

\paragraph{Problem Formulation.}
We study a turbulent channel flow scenario where the goal is to reconstruct the high-fidelity streamwise velocity fluctuation $u'(x, y, t)$ on a two-dimensional $x$-$y$ slice. Physics consistency is enforced through an interior Laplacian smoothing regularizer to penalize unphysical high-frequency artifacts, alongside a strict no-slip boundary condition at the bottom wall ($y=0$).

\paragraph{Data Generation.}
The dataset consists of Direct Numerical Simulation (DNS) snapshots of turbulent channel flow. We extract 2D slices of the streamwise velocity fluctuation $u'$ discretized on a $128 \times 48$ grid, which effectively captures both the near-wall steep gradients and the outer large-scale flow structures. 

\paragraph{Residual Computation.}
The physics residual consists of two components: a boundary condition penalty and an interior smoothness constraint.
For the no-slip boundary condition at the bottom wall ($y=0$), the residual enforces zero velocity fluctuation:
\begin{equation}
R_{\text{BC}}[i] = \hat{u}'_{i,0}, \quad \text{for } 1 \le i \le 128.
\end{equation}
For the interior domain, we apply a Laplacian operator to encourage spatial smoothness and regularize artificial high-frequency noise inherent in the generative process, yielding the residual:
\begin{equation}
R_{\text{smooth}}[i,j] = (\Delta_h \hat{u}')_{i,j},
\end{equation}
where $\Delta_h$ is the standard discrete Laplacian. The total physics residual is computed as the weighted sum of the boundary and smoothing residuals.

\paragraph{Evaluation Metric.}
We evaluate the reconstruction fidelity using Peak Signal-to-Noise Ratio (PSNR) against the ground-truth DNS fields. The physical consistency is measured by the total absolute residual of the boundary and smoothing constraints mentioned above.

\paragraph{Training Details.}
The score network utilizes the U-Net and the DiT backbones described in the main text. The model operates on the $128 \times 48$ grid. The diffusion process uses $T=1000$ timesteps with a cosine noise schedule. For REPA-P, projection heads are attached to the bottleneck layer of the U-Net or the middle blocks of the DiT. The mid-layer alignment weight is set to $c_{\text{mid}} = 0.01$, and the model is trained with Adam optimizer using a learning rate of $10^{-4}$.

\subsection{Network Architecture}

Table~\ref{tab:apx-arch} summarizes the U-Net architecture details for each benchmark task.

\begin{table}[htbp]
\caption{U-Net architecture details for each benchmark task.}
\label{tab:apx-arch}
\centering
\small
\begin{tabular}{@{}lccc@{}}
\toprule
Component & Darcy Flow & Topology Opt. & Charge Potential \\
\midrule
Input channels & 2 & 4 & 2 ($U_t + \rho$) \\
Output channels & 2 & 1 & 1 ($U$) \\
Base channels & 32 & 128 & 32 \\
Channel multipliers & [1,2,4,8] & [1,2,4,8] & [1,2,4] \\
Attention resolutions & [8,16] & [8,16] & [16] \\
Residual blocks per level & 2 & 2 & 2 \\
Dropout & 0.0 & 0.1 & 0.0 \\
\midrule
Total parameters & 9M & 136M & 6M \\
\bottomrule
\end{tabular}
\end{table}

\section{Virtual-Observable Derivation of the Physics Objective}
\label{app:virtual}

\paragraph{Augmented likelihood view.}
Standard diffusion training maximizes the data likelihood (or an ELBO) for samples $x_0\sim q(x_0)$.
To incorporate physical laws, we introduce a virtual observation stating that the discretized PDE/BC residual should be zero. Let
\begin{equation}
\mathcal{O} \;:=\; \{\hat r = 0\}, \qquad \hat r \in \mathbb{R}^{d_r}.
\end{equation}
We model $\hat r$ as a noisy measurement of residual validity:
\begin{equation}
p(\hat r \mid x_0,t) \;=\; \mathcal{N}\!\big(\hat r;\,R(x_0),\,\nu^2(t) I\big),
\label{eq:virt_like_app}
\end{equation}
where $R(\cdot)$ stacks interior and boundary residuals, and $\nu^2(t)$ controls constraint tolerance at diffusion timestep $t$.

Conditioning on the event $\mathcal{O}$ yields an augmented (unnormalized) joint objective
\begin{equation}
\log p_\theta(x_0,\mathcal{O})
\;=\;
\log p_\theta(x_0) + \log p(\hat r=0 \mid x_0,t) + \text{const.}
\end{equation}
Thus, maximizing $\log p(\hat r=0 \mid x_0,t)$ under the model prediction is equivalent to minimizing a residual-weighted negative log-likelihood.

\begin{lemma}[Gaussian virtual observation yields squared residual loss]
\label{lem:gauss_nll}
Under \eqref{eq:virt_like_app},
\begin{equation}
-\log p(\hat r=0 \mid x_0,t)
=
\frac{1}{2\nu^2(t)}\|R(x_0)\|_2^2
\;+\;\text{const.}
\end{equation}
\end{lemma}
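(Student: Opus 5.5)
The plan is to write out the isotropic Gaussian density in \eqref{eq:virt_like_app} explicitly, evaluate it at $\hat r = 0$, and take the negative logarithm. Concretely, for $\hat r\in\mathbb{R}^{d_r}$ with mean $R(x_0)$ and covariance $\nu^2(t)I$, the density is
\begin{equation*}
p(\hat r\mid x_0,t) = \big(2\pi\nu^2(t)\big)^{-d_r/2}\exp\!\Big(-\tfrac{1}{2\nu^2(t)}\|\hat r - R(x_0)\|_2^2\Big).
\end{equation*}
The determinant of $\nu^2(t)I$ is $\nu^{2d_r}(t)$, and its inverse is $\nu^{-2}(t)I$, so the quadratic form collapses to a scaled squared Euclidean norm.

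The next step is to set $\hat r=0$ and use $\|0-R(x_0)\|_2^2=\|R(x_0)\|_2^2$. Taking $-\log$ then gives
\begin{equation*}
-\log p(\hat r=0\mid x_0,t) = \frac{1}{2\nu^2(t)}\|R(x_0)\|_2^2 + \frac{d_r}{2}\log\big(2\pi\nu^2(t)\big).
\end{equation*}

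The only point needing care is the meaning of ``const.'' The second term does not depend on $x_0$, and hence not on $\theta$ or the projection parameters $\phi_\ell$. It does depend on $t$, however. I will therefore state explicitly that the constant is taken with respect to the optimization variables, at fixed $t$. Under this reading it contributes no gradient to \eqref{eq:phys-out} or \eqref{eq:phys-mid-avg}, even after taking the expectation over $t$ in \eqref{eq:total}, because $\nu(t)$ is a fixed schedule and not a learned quantity.

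Finally, I will remark that identifying $\nu^2(t)=\sigma^2(t)$ from \eqref{eq:posterior-var} recovers exactly the per-timestep losses \eqref{eq:phys-out} and \eqref{eq:phys-mid-avg}. I expect no real obstacle here; the only subtlety is the clarification of the $t$-dependent normalizer above.
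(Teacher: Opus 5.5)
Your proposal is correct and follows the same route as the paper, which simply substitutes $\hat r=0$ into the Gaussian density and expands the quadratic form. You add the explicit normalizer $\frac{d_r}{2}\log\big(2\pi\nu^2(t)\big)$ and note that the constant depends on $t$ but not on $x_0$, $\theta$, or $\phi_\ell$; the paper leaves that reading of ``const.'' implicit.
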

\noindent\textit{Proof.} Substitute $\hat r=0$ into the Gaussian density and expand the quadratic form.
\hfill$\square$

\paragraph{From $x_0$ to diffusion training.}
In diffusion training, we observe $(x_t,t)$ and predict $\hat x_0=\hat x_0(x_t,t;\theta)$.
We therefore maximize the physical likelihood under the predicted clean field:
\begin{equation}
\mathcal{L}_{\mathrm{phys}}(t)
\;:=\;
-\mathbb{E}_{x_0,\epsilon}\Big[\log p(\hat r=0 \mid \hat x_0(x_t,t),t)\Big]
\;\equiv\;
\mathbb{E}_{x_0,\epsilon}\Big[\frac{1}{2\nu^2(t)}\|R(\hat x_0)\|_2^2\Big].
\label{eq:phys_from_like_app}
\end{equation}
This directly yields the output-end physics term (up to setting $\nu^2(t)=\sigma^2(t)$).

\paragraph{Extension to intermediate representations (REPA-P).}
For each selected layer $\ell\in\mathcal{P}$, REPA-P forms an aligned field $z_\ell=\Pi_\ell(h_\ell)$ and introduces an additional virtual observation $\hat r_\ell=0$:
\begin{equation}
p(\hat r_\ell \mid z_\ell,t)=\mathcal{N}\!\big(\hat r_\ell;\,R(z_\ell),\,\nu^2(t)I\big).
\end{equation}
Assuming conditional independence of virtual observations given their respective fields, the total negative log virtual-likelihood is additive:
\begin{equation}
\mathcal{L}^{\mathrm{out}}_{\mathrm{phys}}(t)
\;+\;
\frac{1}{|\mathcal{P}|}\sum_{\ell\in\mathcal{P}} \mathcal{L}^{(\ell)}_{\mathrm{phys}}(t),
\qquad
\mathcal{L}^{(\ell)}_{\mathrm{phys}}(t)=\frac{1}{2\nu^2(t)}\|R(z_\ell)\|_2^2,
\end{equation}
which matches the REPA-P objective after setting $\nu^2(t)=\sigma^2(t)$.

\section{Time-Dependent Weighting $\sigma^2(t)$: Uncertainty Calibration}
\label{app:sigma}

\paragraph{Key question.}
Why scale the physics penalty by $1/\sigma^2(t)$ (and why choose $\sigma^2(t)=\tilde\beta_t$)?
We provide two complementary justifications: (i) heteroscedastic maximum-likelihood calibration, and (ii) diffusion-consistent uncertainty scheduling.

\subsection{Heteroscedastic Calibration}
The virtual observation model \eqref{eq:virt_like_app} is heteroscedastic in time.
Maximum likelihood therefore prescribes weighting the squared residual by the inverse noise variance, yielding $\|R(\cdot)\|_2^2/(2\nu^2(t))$ (Lemma~\ref{lem:gauss_nll}). Intuitively, when the field estimate at timestep $t$ is less reliable, the residual observation noise should be larger, making the physics penalty softer.

\subsection{Diffusion-Consistent Choice: $\sigma^2(t)=\tilde\beta_t$}
At timestep $t$, the network infers $\hat x_0$ from noisy $x_t$. The uncertainty of this inference naturally depends on the diffusion schedule. We tie the virtual observation variance to the denoising uncertainty scale.

\paragraph{Delta-method argument.}
Assume the predicted clean field has an estimation error
\begin{equation}
\hat x_0(x_t,t) = x_0 + \delta_t, \qquad \mathbb{E}[\delta_t]=0,\qquad \mathrm{Cov}(\delta_t)\approx \tau^2(t)I.
\label{eq:x0_err_model}
\end{equation}
Linearizing the residual around $x_0$ gives
\begin{equation}
R(\hat x_0) \;\approx\; R(x_0) + J_R(x_0)\,\delta_t.
\end{equation}
For (approximately) physical data, $R(x_0)\approx 0$, so the residual behaves like a noisy observation:
\begin{equation}
R(\hat x_0)\;\approx\; J_R(x_0)\,\delta_t,
\qquad
\mathbb{E}\big[\|R(\hat x_0)\|_2^2\big]
\;\approx\;
\tau^2(t)\,\mathrm{tr}\!\big(J_R(x_0)J_R(x_0)^\top\big).
\label{eq:residual_var_scale}
\end{equation}
Thus, dividing by $\tau^2(t)$ yields a time-normalized physics signal magnitude.
Approximating the (unknown) anisotropic residual covariance by an isotropic scalar, we set $\nu^2(t)\propto \tau^2(t)$.

\paragraph{Choosing $\tau^2(t)$ from the diffusion schedule.}
A diffusion-consistent proxy for denoising uncertainty at step $t$ is the DDPM posterior variance
\begin{equation}
\sigma^2(t)\;=\;\tilde\beta_t
=\frac{1-\bar\alpha_{t-1}}{1-\bar\alpha_t}\,\beta_t,
\label{eq:sigma_t_choice_app}
\end{equation}
which characterizes the intrinsic uncertainty scale of the reverse transition at timestep $t$.
We therefore set the virtual observation tolerance to match this scale, $\nu^2(t)=\sigma^2(t)=\tilde\beta_t$, obtaining
\begin{equation}
\mathcal{L}_{\mathrm{phys}}(t)
=\frac{1}{2}\frac{\|R(\hat x_0)\|_2^2}{\sigma^2(t)}.
\end{equation}

\paragraph{Interpretation as a noise curriculum.}
Since $\tilde\beta_t$ is larger at high-noise stages and smaller at low-noise stages, the weighting $1/\sigma^2(t)$ automatically enforces physics weakly when $\hat x_0$ is uncertain (early timesteps), and strongly when the denoised estimate is reliable (late timesteps), reducing noisy-gradient interference and improving stability in practice.

\section{Gradient Flow Analysis for REPA-P}
\label{app:gradflow}

\paragraph{Goal.}
We formalize why injecting physics losses at intermediate representations provides a stronger and better-conditioned physical learning signal than output-only constraints.

\paragraph{Notation.}
Fix timestep $t$. Let $h_\ell$ be a hidden representation at layer $\ell$.
Write the mapping from $h_\ell$ to the output prediction as
\begin{equation}
\hat x_0 = g_\ell(h_\ell),
\end{equation}
where $g_\ell$ denotes the composition of all subsequent backbone layers from $\ell$ to output.
REPA-P also forms $z_\ell=\Pi_\ell(h_\ell)$.

Define the (scaled) physics losses:
\begin{equation}
\mathcal{L}^{\mathrm{out}}_{\mathrm{phys}}(t)
=\frac{1}{2}\frac{\|R(\hat x_0)\|_2^2}{\sigma^2(t)},
\qquad
\mathcal{L}^{(\ell)}_{\mathrm{phys}}(t)
=\frac{1}{2}\frac{\|R(z_\ell)\|_2^2}{\sigma^2(t)}.
\end{equation}

\paragraph{A generic identity (Gauss-Newton form).}
Let $\phi(x)=\frac{1}{2}\|R(x)\|_2^2$. If $R$ is differentiable,
\begin{equation}
\nabla_x \phi(x) = J_R(x)^\top R(x),
\label{eq:gn_identity_app}
\end{equation}
where $J_R(x)$ is the Jacobian of $R$ at $x$.

\paragraph{Output-only supervision: deep Jacobian chain.}
Using \eqref{eq:gn_identity_app} and the chain rule,
\begin{equation}
\nabla_{h_\ell}\mathcal{L}^{\mathrm{out}}_{\mathrm{phys}}(t)
=
\frac{1}{\sigma^2(t)}
\;J_{g_\ell}(h_\ell)^\top\;
J_R(\hat x_0)^\top\;
R(\hat x_0).
\label{eq:grad_out_app}
\end{equation}
The term $J_{g_\ell}(h_\ell)$ is the Jacobian of a deep mapping and may become ill-conditioned, attenuating the physical signal before it reaches early representations.

\paragraph{REPA-P mid-layer supervision: short gradient path.}
Similarly, for $z_\ell=\Pi_\ell(h_\ell)$,
\begin{equation}
\nabla_{h_\ell}\mathcal{L}^{(\ell)}_{\mathrm{phys}}(t)
=
\frac{1}{\sigma^2(t)}
\;J_{\Pi_\ell}(h_\ell)^\top\;
J_R(z_\ell)^\top\;
R(z_\ell).
\label{eq:grad_mid_app}
\end{equation}
Compared with \eqref{eq:grad_out_app}, the gradient path bypasses the remaining backbone and depends only on the lightweight head $\Pi_\ell$, yielding an in-situ physical learning signal.

\begin{proposition}[Attenuation bound for credit assignment]
\label{prop:atten_bound_app}
Assume $g_\ell$ is a composition of maps $\{g_k\}_{k=\ell+1}^{L}$ such that $\|J_{g_k}\|\le L_k$ (any operator norm). Then
\begin{equation}
\big\|\nabla_{h_\ell}\mathcal{L}^{\mathrm{out}}_{\mathrm{phys}}(t)\big\|
\le
\frac{1}{\sigma^2(t)}
\Big(\prod_{k=\ell+1}^{L} L_k\Big)
\;\big\|J_R(\hat x_0)^\top R(\hat x_0)\big\|.
\label{eq:out_bound_app}
\end{equation}
In contrast,
\begin{equation}
\big\|\nabla_{h_\ell}\mathcal{L}^{(\ell)}_{\mathrm{phys}}(t)\big\|
\le
\frac{1}{\sigma^2(t)}
\;\|J_{\Pi_\ell}(h_\ell)\|\;
\big\|J_R(z_\ell)^\top R(z_\ell)\big\|.
\label{eq:mid_bound_app}
\end{equation}
\end{proposition}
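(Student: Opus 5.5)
The plan is to start from the two gradient expressions already derived in \eqref{eq:grad_out_app} and \eqref{eq:grad_mid_app}. Both follow from the Gauss--Newton identity \eqref{eq:gn_identity_app} and the chain rule. From there we apply submultiplicativity of the operator norm, together with the fact that an operator and its adjoint have the same norm. Since $\sigma^2(t)>0$ is a fixed scalar for fixed $t$, it factors out of every norm, so we only need to bound matrix--vector products.

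For the output bound \eqref{eq:out_bound_app}, we write $g_\ell = g_L\circ g_{L-1}\circ\cdots\circ g_{\ell+1}$. The chain rule then factors the Jacobian as
$J_{g_\ell}(h_\ell)=J_{g_L}(u_{L-1})\cdots J_{g_{\ell+1}}(u_\ell)$,
where $u_\ell=h_\ell$ and $u_k=g_k(u_{k-1})$ are the intermediate activations. Submultiplicativity together with the hypothesis $\|J_{g_k}\|\le L_k$ gives $\|J_{g_\ell}(h_\ell)\|\le\prod_{k=\ell+1}^{L}L_k$.

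Next we use $\|A^\top v\|\le\|A^\top\|\,\|v\|=\|A\|\,\|v\|$ with $A=J_{g_\ell}(h_\ell)$ and $v=J_R(\hat x_0)^\top R(\hat x_0)$. Substituting into \eqref{eq:grad_out_app} yields the claim. We deliberately keep $J_R^\top R$ grouped as a single vector, exactly as in the statement, rather than splitting off $\|J_R\|$.

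The mid-layer bound \eqref{eq:mid_bound_app} follows the same way, with no chain to expand. Taking norms in \eqref{eq:grad_mid_app} and applying $\|J_{\Pi_\ell}^\top w\|\le\|J_{\Pi_\ell}\|\,\|w\|$ with $w=J_R(z_\ell)^\top R(z_\ell)$ gives the result directly. Here $\Pi_\ell=\mathcal{I}_\ell\circ\psi_\ell$ is the composition of a $1{\times}1$ convolution block with bilinear interpolation. If $\psi_\ell$ is a single $1{\times}1$ convolution, $\Pi_\ell$ is linear, so its Jacobian is constant and its norm is a small, explicitly computable constant. For the two-layer ReLU head used in the charge task, one can instead bound the Jacobian almost everywhere; this does not affect the inequality as stated.

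The only delicate point is the phrase ``any operator norm.'' The transpose step $\|A^\top\|=\|A\|$ is exact only for the spectral norm, or more generally when the norm on $h_\ell$ and the norm on the output space are dual to the norms in which the $L_k$ are measured. To be safe, I would interpret each $\|J_{g_k}\|$ as the operator norm induced by the dual norms of the spaces involved; for Euclidean norms this is simply the spectral norm, and the identification is immediate. Beyond that, the argument is routine differentiability plus norm inequalities. I would close by remarking that when $\prod_k L_k\ll 1$ (contractive layers), the output gradient at $h_\ell$ is attenuated, whereas the mid-layer bound is independent of depth. That is the qualitative point the proposition is meant to support.
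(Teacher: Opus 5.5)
Your proposal is correct and follows the paper's proof. It takes norms in the chain-rule expressions \eqref{eq:grad_out_app} and \eqref{eq:grad_mid_app}, applies $\|A^\top b\|\le\|A\|\,\|b\|$, and bounds $\|J_{g_\ell}\|\le\prod_{k=\ell+1}^{L}L_k$ through the factorization $J_{g_\ell}=J_{g_L}\cdots J_{g_{\ell+1}}$. Your caveat about ``any operator norm'' is a valid refinement that the paper's one-line proof skips. The identity $\|A^\top\|=\|A\|$ fails for general induced norms: under the $\ell_\infty$-induced norm, the matrix with rows $(1,0)$ and $(1,0)$ has norm $1$, but its transpose has norm $2$. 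So the statement should be read with the spectral norm, or with gradients measured in the dual norms, as you propose.
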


\noindent\textit{Proof.}
Take norms on \eqref{eq:grad_out_app} and \eqref{eq:grad_mid_app} and apply submultiplicativity: $\|A^\top b\|\le \|A\|\,\|b\|$.
Since $J_{g_\ell}=J_{g_L}\cdots J_{g_{\ell+1}}$, we have $\|J_{g_\ell}\|\le \prod_{k=\ell+1}^{L} L_k$.
\hfill$\square$

\paragraph{Implication.}
Proposition~\ref{prop:atten_bound_app} formalizes that output-only physics gradients can be suppressed by a deep product of Jacobian norms, while REPA-P injects physics gradients directly at intermediate layers via $\Pi_\ell$, substantially shortening the credit assignment path.
Because the same $1/\sigma^2(t)$ scaling applies to both \eqref{eq:grad_out_app} and \eqref{eq:grad_mid_app}, the key distinction is the presence (or absence) of the deep Jacobian chain.

\section{Additional Visualizations}
\label{app:vis}

This section provides additional qualitative visualizations complementing the main experimental results. We present detailed field comparisons and residual error visualizations for the electrostatic charge potential task (Figure~\ref{fig:charge_qualitative}) and the Darcy flow sparse reconstruction task (Figure~\ref{fig:darcy_sparse_qualitative}). Furthermore, Figure~\ref{fig:turbulence_qualitative} visualizes the reconstruction results for the complex turbulent channel flow task. 

\begin{figure*}[htbp]
  \centering
  \includegraphics[width=\linewidth]{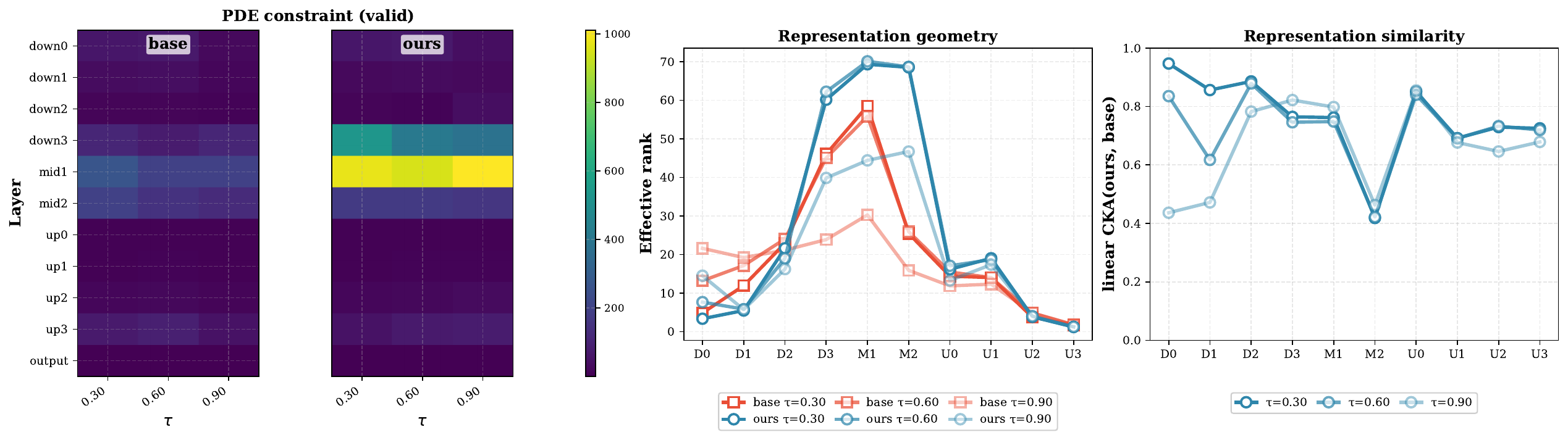} 
  \caption[Representation Diagnostics for REPA-P vs. Baseline]{
    \textbf{Representation diagnostics for baseline vs. REPA-P.}
    \textit{Left:} PDE-constraint responses across network layers under different thresholds $\tau$. Our method shows stronger, more concentrated constraint responses around the bottleneck layers.
    \textit{Middle:} Representation geometry measured by effective rank. Our model achieves higher rank in the encoder-to-bottleneck region (D0-M2), indicating richer, less-collapsed features.
    \textit{Right:} Linear CKA similarity between the two models. Similarity remains high for most layers but drops markedly near the bottleneck (M1-M2), suggesting targeted reorganization of the core latent space rather than uniform changes.
  }
  \label{fig:repap-3in1-representation-diagnostics}
\end{figure*}

\begin{figure*}[htbp]
\centering
\includegraphics[width=1\linewidth]{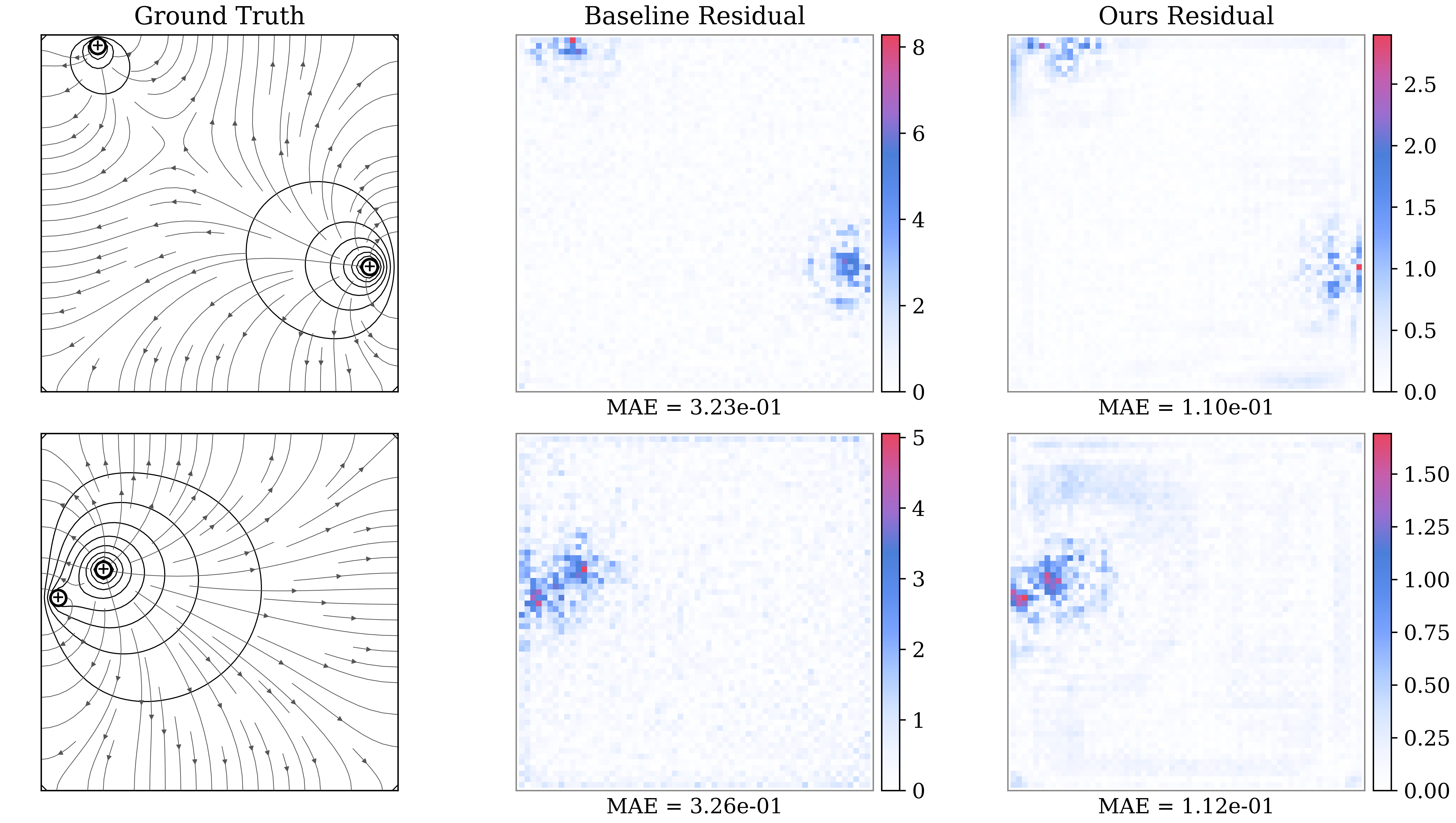}
\caption{Qualitative comparison on the electrostatic charge potential task. \textbf{Left:} Ground truth potential field with charge locations. \textbf{Middle:} Absolute PDE residual $|(-\Delta_h U) - \rho|$ for the baseline model; higher values (red) indicate violations of the Poisson equation. \textbf{Right:} Absolute PDE residual for REPA-P. REPA-P reduces the residual MAE by $66.4\%$ and achieves more uniform error distribution across the domain, indicating that physics consistency is enforced globally rather than through localized accuracy at the expense of broader violations.}
\label{fig:charge_qualitative} 
\end{figure*}

\begin{figure*}[htbp]
\centering
\includegraphics[width=1\linewidth]{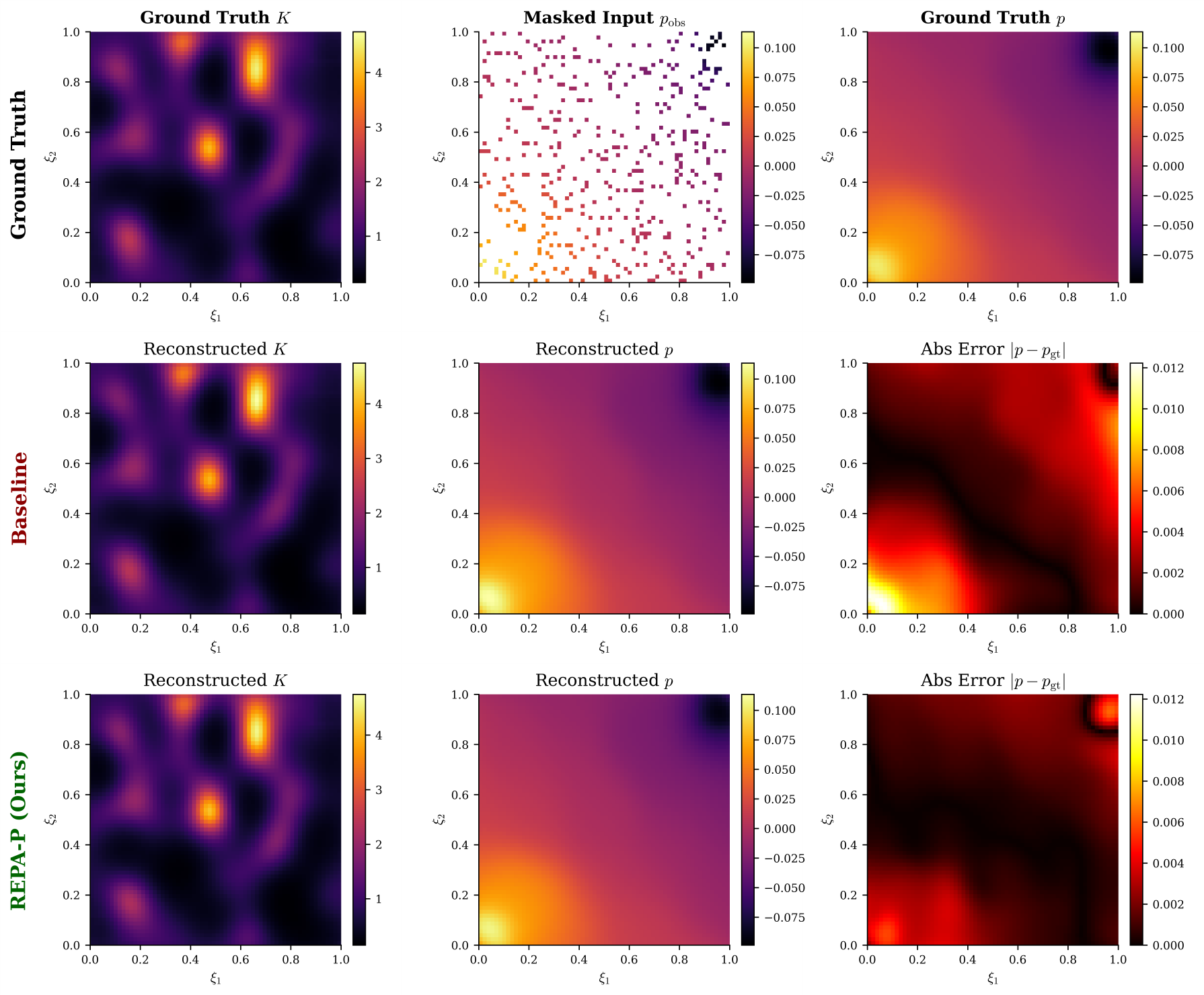}
\caption{Darcy flow sparse reconstruction from 30\% observed pressure measurements. \textbf{Top row:} Ground truth. \textbf{Middle row:} Baseline. \textbf{Bottom row:} REPA-P. Each row shows: (a) permeability $K(\xi)$, (b) pressure $p(\xi)$, and (c) absolute error. REPA-P achieves lower reconstruction error with better preservation of physical structures, demonstrating that intermediate alignment yields more accurate and physically consistent reconstructions.}
\label{fig:darcy_sparse_qualitative} 
\end{figure*}

\begin{figure*}[htbp]
\centering
\includegraphics[width=1\linewidth]{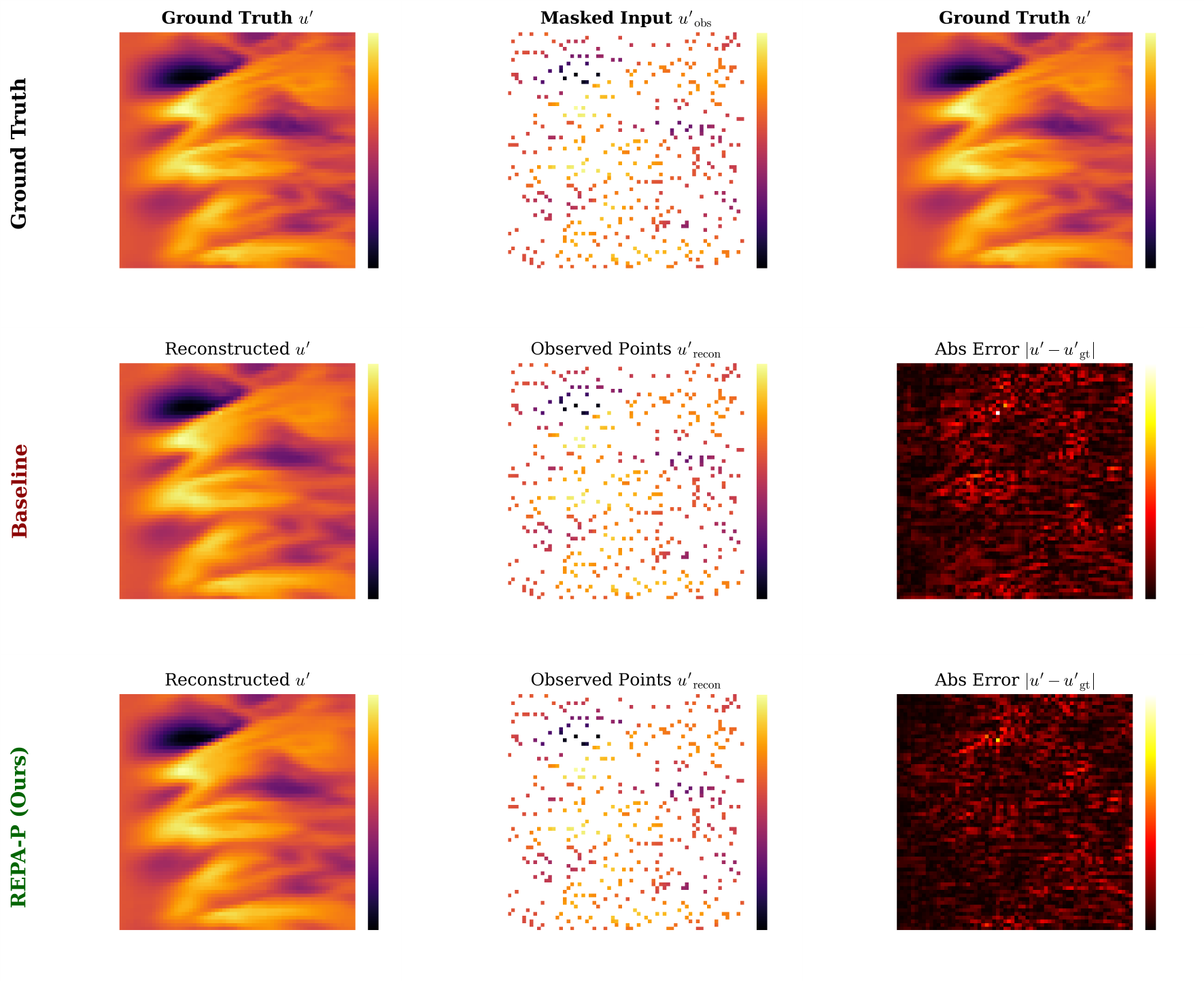}
\caption{Turbulent channel flow reconstruction from sparse velocity measurements. \textbf{Top row:} Ground truth. \textbf{Middle row:} Baseline. \textbf{Bottom row:} REPA-P. Each row shows: (a) reconstructed streamwise velocity fluctuation $u'$, (b) observed measurements $u'_{\text{obs}}$, and (c) absolute error $|u' - u'_{\text{gt}}|$. REPA-P achieves lower reconstruction error with better preservation of near-wall flow structures, demonstrating that intermediate alignment yields more accurate and physically consistent reconstructions.}
\label{fig:turbulence_qualitative}
\end{figure*}

\section{Additional Ablation Studies}
\label{app:ablation}

This section provides additional ablation studies on hyperparameter sensitivity, complementing the alignment position analysis in the main text. We investigate the physics loss weight $c_{\text{mid}}$, the projection head hidden dimension, and the alignment position in the DiT architecture. All projection head experiments are conducted at the optimal alignment position identified in the main text.

\subsection{Effect of Physics Loss Weight}
\label{app:ablation_weight}

Table~\ref{tab:ablation_weight} summarizes the sensitivity of REPA-P to the physics loss weight $c_{\text{mid}}$ across all benchmark tasks. The projection head hidden dimension is fixed at 128 for these experiments.

\paragraph{Observations.}
The optimal physics loss weight varies across tasks: Darcy flow benefits from $c_{\text{mid}} = 0.1$, topology optimization from $c_{\text{mid}} = 0.005$, and the charge potential problem from $c_{\text{mid}} = 0.01$. This variation reflects differences in the scale and conditioning of physics residuals across tasks. REPA-P consistently outperforms the baseline across a wide range of weight values (0.001--0.1), demonstrating robustness to this hyperparameter. Very large weights ($c_{\text{mid}} = 0.5$) degrade performance by over-constraining the intermediate representations.

\begin{table*}[htbp]
  \caption{Sensitivity analysis of physics loss weight $c_{\text{mid}}$ across all benchmark tasks. Projection head hidden dimension is fixed at 128. Best results (excluding baseline) in \textbf{bold}. $\downarrow$: lower is better; $\uparrow$: higher is better.}
  \label{tab:ablation_weight}
  \centering
  \small
  \setlength{\tabcolsep}{3.5pt}
  \renewcommand{\arraystretch}{1.15}
  \begin{tabular}{@{}l cc ccc c cc@{}}
    \toprule
    & \multicolumn{2}{c}{\textbf{Darcy Flow}} 
    & \multicolumn{3}{c}{\textbf{Topology Optimization (ID)}} 
    & \textbf{Charge}
    & \multicolumn{2}{c}{\textbf{Turbulence}} \\
    \cmidrule(lr){2-3} \cmidrule(lr){4-6} \cmidrule(lr){7-7} \cmidrule(lr){8-9}
    $c_{\text{mid}}$
    & Data$\downarrow$ & Phys.$\downarrow$
    & Phys.$\downarrow$ & CE\%$\downarrow$ & VFE\%$\downarrow$
    & Phys.$\downarrow$
    & PSNR$\uparrow$ & Phys.$\downarrow$ \\
    \midrule
    Baseline 
    & 0.0180 & 0.0260
    & 5.2e-3 & 9.24 & 3.38
    & 0.381
    & 37.64 & 1.91e-3 \\
    0.001 
    & 0.0156 & 0.0177
    & 7.8e-3 & 11.23 & 4.13
    & 0.245
    & 38.47 & 1.86e-3 \\
    0.005 
    & 0.0142 & 0.0165
    & \textbf{4.5e-3} & \textbf{4.17} & \textbf{3.02}
    & 0.189
    & 37.91 & 2.04e-3 \\
    0.01 
    & \textbf{0.0119} & 0.0164
    & 5.3e-3 & 9.46 & 3.64
    & \textbf{0.128}
    & \textbf{39.95} & \textbf{1.75e-3} \\
    0.05 
    & 0.0163 & 0.0207
    & 6.4e-3 & 6.47 & 4.02
    & 0.199
    & 38.04 & 1.83e-3 \\
    0.1 
    & 0.0142 & \textbf{0.0143}
    & 5.8e-3 & 7.84 & 3.38
    & 0.254
    & 37.85 & 1.81e-3 \\
    0.5 
    & 0.0152 & 0.0241
    & 6.3e-3 & 9.95 & 4.02
    & 0.227
    & 39.11 & 1.72e-3 \\
    \bottomrule
  \end{tabular}
\end{table*}
\subsection{Effect of Projection Head Dimension}
\label{app:ablation_dim}

Table~\ref{tab:ablation_dim} summarizes the sensitivity of REPA-P to the projection head hidden dimension across all three benchmark tasks. The physics loss weight is fixed at $c_{\text{mid}} = 0.01$ for these experiments.

\paragraph{Observations.}
Projection head dimension shows task-dependent optimal values: Darcy flow and charge potential favor larger dimensions (128--256), while topology optimization performs comparably across dimensions with a slight preference for smaller heads (32--64). This suggests that the required head capacity depends on the complexity of mapping intermediate features to physical quantities. Even dimension 32 provides substantial improvements over the baseline, indicating that the alignment mechanism, rather than head capacity, is the primary driver of performance gains. The lightweight heads (adding $<1\%$ parameters) make REPA-P practical for large-scale deployment.

\begin{table*}[htbp]
  \caption{Sensitivity analysis of projection head hidden dimension across all benchmark tasks. Physics loss weight is fixed at $c_{\text{mid}} = 0.01$. Best results (excluding baseline) in \textbf{bold}. $\downarrow$: lower is better; $\uparrow$: higher is better.}
  \label{tab:ablation_dim}
  \centering
  \small
  \setlength{\tabcolsep}{3.5pt}
  \renewcommand{\arraystretch}{1.15}
  \begin{tabular}{@{}l cc ccc c cc@{}}
    \toprule
    & \multicolumn{2}{c}{\textbf{Darcy Flow}} 
    & \multicolumn{3}{c}{\textbf{Topology Optimization (ID)}} 
    & \textbf{Charge}
    & \multicolumn{2}{c}{\textbf{Turbulence}} \\
    \cmidrule(lr){2-3} \cmidrule(lr){4-6} \cmidrule(lr){7-7} \cmidrule(lr){8-9}
    Dim
    & Data$\downarrow$ & Phys.$\downarrow$
    & Phys.$\downarrow$ & CE\%$\downarrow$ & VFE\%$\downarrow$
    & Phys.$\downarrow$
    & PSNR$\uparrow$ & Phys.$\downarrow$ \\
    \midrule
    Baseline 
    & 0.0180 & 0.0260
    & 5.2e-3 & 9.24 & 3.38
    & 0.381
    & 37.64 & 1.91e-3 \\
    256 
    & 0.0135 & \textbf{0.0143}
    & 6.7e-3 & 6.25 & 4.13
    & 0.172
    & 38.06 & 1.92e-3 \\
    128 
    & \textbf{0.0119} & 0.0164
    & 5.3e-3 & 9.46 & 3.64
    & \textbf{0.128}
    & \textbf{39.95} & \textbf{1.75e-3} \\
    64 
    & 0.0151 & 0.0178
    & \textbf{4.5e-3} & 8.37 & \textbf{3.18}
    & 0.216
    & 39.57 & 1.77e-3 \\
    32 
    & 0.0193 & 0.0221
    & 5.7e-3 & \textbf{4.38} & 3.86
    & 0.284
    & 38.94 & 1.80e-3 \\
    \bottomrule
  \end{tabular}
\end{table*}

\subsection{Effect of Alignment Position in DiT Architecture}
\label{app:ablation_dit_pos}

To verify that the benefits of REPA-P are not restricted to the U-Net topology, we conduct an ablation study on the alignment position within the Diffusion Transformer (DiT) backbone. We partition the 8-layer DiT into three consecutive feature abstraction stages: early (Blocks 1--3), middle (Blocks 4--6), and late (Blocks 7--8). We select the central block of each stage (Block 2, Block 4, and Block 6) as representative positions to attach projection heads, covering the full trajectory of feature evolution from low-level local patterns to high-level semantic representations. Table~\ref{tab:dit_ablation_pos} presents the complete performance across all four benchmarks.

\paragraph{Observations.}
Consistent with the U-Net results, aligning at the middle stage (Block 4) achieves the optimal trade-off between data fidelity and physical consistency across all tasks. Early-stage alignment fails to decode complex physical relationships due to insufficient feature abstraction, while late-stage alignment suffers from deferred physics reasoning and attenuated gradient signals (see Appendix~\ref{app:gradflow}). This confirms that mid-layer physical supervision is effective across fundamentally different architectural paradigms (CNN and Transformer).

\begin{table*}[t]
  \caption{Ablation study of REPA-P alignment position within the \textbf{DiT architecture} across all benchmarks. We partition the 8-layer DiT into three feature stages and select representative blocks for evaluation. Best results (excluding baseline) in \textbf{bold}. $\downarrow$: lower is better; $\uparrow$: higher is better.}
  \label{tab:dit_ablation_pos}
  \centering
  \resizebox{\textwidth}{!}{
    \setlength{\tabcolsep}{4pt}
    \renewcommand{\arraystretch}{1.15}
    \begin{tabular}{@{}l *{12}{c}@{}}
        \toprule
        & \multicolumn{4}{c}{\textbf{Darcy Flow}} 
        & \multicolumn{4}{c}{\textbf{Topology Optimization}} 
        & \multicolumn{2}{c}{\textbf{Charge}}
        & \multicolumn{2}{c}{\textbf{Turbulence}} \\
        \cmidrule(lr){2-5} \cmidrule(lr){6-9} \cmidrule(lr){10-11} \cmidrule(lr){12-13}
        & \multicolumn{2}{c}{Generation} & \multicolumn{2}{c}{Reconstruction} 
        & \multicolumn{2}{c}{In-Distribution} & \multicolumn{2}{c}{Out-of-Distribution} 
        & \multicolumn{2}{c}{Generation} 
        & \multicolumn{2}{c}{Reconstruction} \\ 
        \cmidrule(lr){2-3} \cmidrule(lr){4-5} \cmidrule(lr){6-7} \cmidrule(lr){8-9} \cmidrule(lr){10-11} \cmidrule(lr){12-13}
        DiT Position 
        & Data$\downarrow$ & Phys.$\downarrow$ 
        & PSNR$\uparrow$ & Phys.$\downarrow$ 
        & CE\%$\downarrow$ & Phys.$\downarrow$
        & CE\%$\downarrow$ & Phys.$\downarrow$
        & Data$\downarrow$ & Phys.$\downarrow$
        & PSNR$\uparrow$ & Phys.$\downarrow$ \\
        \midrule
        Baseline (PIDM)
        & 0.0831 & 0.0719 & 35.34 & 0.0692 & 14.93 & 1.38e-3 & 10.11 & 1.42e-3 & 0.0135 & 0.367 & 38.16 & 1.81e-3 \\
        + Early (Block 2)
        & 0.0482 & 0.0621 & 36.51 & 0.0512 & 11.54 & 1.42e-3 & 9.35 & 1.36e-3 & 0.0117 & 0.302 & 38.89 & 1.55e-3 \\
        + Middle (Block 4) 
        & \textbf{0.0352} & \textbf{0.0534} & \textbf{37.39} & \textbf{0.0378} & \textbf{6.79} & \textbf{1.31e-3} & \textbf{5.32} & \textbf{1.30e-3} & \textbf{0.0093} & \textbf{0.220} & \textbf{39.65} & \textbf{1.35e-3} \\
        + Late (Block 6)
        & 0.0415 & 0.0588 & 36.92 & 0.0455 & 8.43 & 1.33e-3 & 6.46 & 1.32e-3 & 0.0098 & 0.274 & 39.12 & 1.48e-3 \\
        \bottomrule
    \end{tabular}
  }
\end{table*}

\end{document}